%
%

\documentclass[%
 reprint,
superscriptaddress,
 amsmath,amssymb,
 aps,
]{revtex4-2}



\usepackage{graphicx}
\usepackage{dcolumn}
\usepackage{bm}
\usepackage{amssymb,amstext,amsmath,amsfonts,amsthm}
\usepackage{xr}
\usepackage{color}
\usepackage{algorithm,algorithmic}
\usepackage{graphicx}
\usepackage{subfigure}
\usepackage{textcomp}
\usepackage{diagbox}
\usepackage{makecell}
\usepackage{multirow}
\usepackage{booktabs}



\definecolor{darkred}{rgb}{0,0,0}
\definecolor{darkblue}{rgb}{0,0,0}

\newcommand{\E}{{\mathbb E}}
\newcommand{\Rbb}{\mathbb R}

\newcommand{\Cbb}{\mathbb C}

\newcommand{\Zbb}{\mathbb Z}
\newcommand{\Tbb}{\mathbb T}


\newcommand{\Bb}{\mathbf B}
\newcommand{\Cb}{\mathbf C}
\newcommand{\Db}{\mathbf D}

\newcommand{\Hb}{\mathbf H}

\newcommand{\Lb}{\mathbf L}

\newcommand{\Qb}{\mathbf Q}

\newcommand{\Ub}{\mathbf U}
\newcommand{\Vb}{\mathbf V}

\newcommand{\Xb}{\mathbf  X}

\newcommand{\df}{{\rm d}}
\newcommand{\pdf}{\partial}
\newcommand{\dd}{{\rm d}^2}

\newcommand{\Acal}{\mathcal A}
\newcommand{\Bcal}{\mathcal B}

\newcommand{\Ncal}{\mathcal N}

\newcommand{\diag}{{\rm diag}}

\newcommand{\Span}{\operatorname{span\,}}

\newcommand{\rank}{\operatorname{rank\,}}

\newcommand{\trace}{\operatorname{tr}}
\newcommand{\vecc}{\operatorname{vec}}
\newcommand{\unvec}{\operatorname{unvec}}

\newtheorem{thm}{Theorem}
\newtheorem{cor}[thm]{Corollary}
\newtheorem{lem}[thm]{Lemma}
\newtheorem{prop}[thm]{Proposition}


\begin{document}

\preprint{APS/123-QED}

\title{ARMAX identification of low rank graphical models}

\author{Wenqi Cao}
\affiliation{%
Center for Systems and Control, College of Engineering, Peking University, Beijing 100871, China}
\author{Aming Li}
\thanks{Corresponding author: amingli@pku.edu.cn}
\affiliation{%
Center for Systems and Control, College of Engineering, Peking University, Beijing 100871, China}
\affiliation{
Center for Multi-Agent Research, Institute for Artificial Intelligence, Peking University, Beijing 100871, China}



\date{\today}

\begin{abstract}
In large-scale systems, complex internal relationships are often present. Such interconnected systems can be effectively described by low rank stochastic processes.
When identifying a predictive model of low rank processes from sampling data, 
the rank-deficient property of spectral densities is often obscured by the inevitable measurement noise in practice.
However, existing low rank identification approaches often did not take noise into explicit consideration, leading to non-negligible inaccuracies even under weak noise.
In this paper, we address the identification issue of low rank processes under measurement noise.
We find that the noisy measurement model admits a sparse plus low rank structure in latent-variable graphical models.
Specifically, we first decompose the problem into a maximum entropy covariance extension problem, and a low rank graphical estimation problem based on an autoregressive moving-average with exogenous input (ARMAX) model.
To identify the ARMAX low rank graphical models, we propose an estimation approach based on maximum likelihood. 
The identifiability and consistency of this approach are proven under certain conditions.
Simulation results confirm the reliable performance of the entire algorithm in both the parameter estimation and noisy data filtering.

\end{abstract}

\maketitle

\section{Introduction}\label{sec:introduction}
Large-scale systems become increasingly prevalent in science and technology nowadays, with the development of system science and artificial intelligence. 
As a typical feature, the high-dimensional property of such systems leads to complex inner relationships among the components within. 
The aim of better describing and understanding large-scale systems with internal correlations has inspired a multitude of fundamental research on graphical models and networks, such as \cite{Dahlhaus00,Zorzi24,Lin13,Pilonetto25,PillonettoPNAS,Li17,WTC19,Augusto21,Augusto18, BGHP17,Chiuso12,WVD18,GraphicalBook10,topo10,Yu18,Mu23,LiuNature}.
The applications of these studies span across diverse fields, including economics and financial analysis \cite{finance16,ecom_social_10}, multi-agents and robotics \cite{multi_agents18,robo10}, ecological evolution \cite{ecology12}, swarm intelligence and sociology \cite{ecom_social_10,social18}, etc.

Denote the nodes in a graph or network by a (possibly high-dimensional) vector process, the existence of a low rank character has been theoretically proven \cite{CLPtac23,CPLauto23,BLM19}, and empirically verified by plentiful data from different real networks \cite{lowrankNP24,sparseNP24}.
In the time domain, the low rank character is revealed by the rank deficiency of a transition matrix in general dynamic network models \cite{lowrankNP24,BLM19}. 
This phenomenon is considered also as the sparsity or singularity of signals and studied in amounts of literature like \cite{Augusto18,EJC10,Chiuso12}.  
The low rank character, along with network representations, allows the system to be driven or determined by only a subset of all nodes. 
This leads to an important topic in the field of complexity science \cite{Li17,LiuNature,ecom_social_10,social18,lowrankNP24,sparseNP24}: identifying the key nodes in networks to efficiently reconstruct or control associated complex networks \cite{Li17,LiuNature,ecom_social_10}.
Initially, key nodes identification problem was studied with known network topologies using static data.
In recent years, it has been expanded to include time-series data with unknown topologies.
Some impressive methodological and theoretical identification studies can be referred to \cite{Yu24SR,Mu17,WTC19}.

For an $(m+l)$-dimensional linear stochastic stationary vector process $y(t)$, the low rank character can be also demonstrated by a rank deficient spectral density matrix $\Phi_y(z)$ of rank $l<m+l$ in the frequency domain. 
In our previous work, the spectra \cite{CL24}, modeling \cite{CLPtac23} and identification \cite{CPLauto23} of low rank vector processes are discussed. 
By rearranging the components of $y(t)$ if necessary, there is a decomposition,
\begin{align}\label{eq:y=ymyl}
    y(t)=\begin{bmatrix}
           y_m(t) \\
           y_l(t)
         \end{bmatrix},
\end{align}
with $y_l$ an $l$-dimensional process with a spectral density $\Phi_{y_l}(z)$ of full rank $l$.
We find a unique deterministic relation always exists in $\Phi_y(z)$ as well as between the two sub-processes of $y(t)$, satisfying 
\begin{align}\nonumber
    y_m(t)=H(z)y_l(t).
\end{align}
This helps understand the internal correlations and reduce the system dimension to the dimension of $y_l(t)$, 
which greatly simplifies the procedures of modeling, identification and sequential kinds of missions. 

The entries in the subprocess $y_l(t)$ can also be regarded as the key nodes of the network composed of all entries in $y(t)$. This fact endows the potential of low rank processes to be utilized in research on graphs and networks, with deterministic relations determined by spectral densities. 
Beyond our work \cite{CPLauto23,CL24}, the identification of such low rank processes has been recently addressed from various important points of view, including dynamic networks \cite{BGHP17,WVD18,Chiuso12,WVD18-2}, singular ARMA models \cite{EJC10,BLM19}, latent state space models \cite{JoeS23}, etc.

Existing exquisite work on identifying low rank or singular processes without input signals include references such as \cite{BGHP17,WVD18,EJC10,BLM19} and our previous work \cite{CPLauto23}, where the low rank process aimed to estimate is motivated by an innovation white noise process in essence. 
However, in practice, signals are generally measured under noise. This ubiquitous scenario has not been explicitly considered in the identification of low rank processes. 
When a low rank process is measured under noise, 
the only accessible measurement process will exhibit a full rank spectral density, even if the intensity of noise is of a small magnitude. 
In this scenario, the hide of low rank processes will result in non-negligible inaccuracies for identification approaches which assume the spectral rank deficiency of sampled time series. 
Therefore, studying the identification of low rank processes given noisy measurements remains a challenging problem.
The issue will be investigated in this paper.  

Considering the description and exploration of the internal correlations in a vector process, graphical models emerge as a reliable tool under measurement noise \cite{Dahlhaus00,Lin13,Zorzi24,Augusto18,GraphicalBook10,topo10,Augusto21}. 
Specifically, the conditional dependencies between pairwise entries are denoted by an edges between nodes in a graphical model. 
Various classical approaches can be used to estimate graphical models of time series \cite{GraphicalBook10}, among which the maximum entropy covariance extension tool is particularly popular and has been deeply studied in a series of works \cite{Lin13,Augusto18,Zorzi24,Augusto21}, for its advantage in estimating the topology. 
In the latest work on ARMA graphical identification \cite{Zorzi24}, cepstral coefficients \cite{Bin24} are innovatively utilized in the problem description, facilitating the joint estimation of AR and MA parts.

To further simplify the graph representation, latent-variable graphical model is proposed for variables correlated through a restricted number of latent variables \cite{Zorzi16}. By introducing latent variables, the inverse of the manifest (observed) spectrum can be decomposed into a sum of a sparse matrix plus a low rank matrix.
The existing several identification approaches for latent-variable graphical models are designed utilizing this sparse plus low rank (S+L) structure  \cite{Zorzi16,Yu23cdc,Yu24auto}. 
In the recent impressive work \cite{Yu24auto}, trace approximation and reweighted nuclear norm minimization are used to jointly estimate the graph topology and parameters of AR graphical latent-variable models, and \cite{Yu23cdc} extends this work to ARMA models.
A relevant model for exploring a smaller number of latent variables is factor analysis model, see such as \cite{EJC10,FAbook10,P23,Lucia24}. It is worth mentioning that in \cite{P23}, state space dynamic generalized factor analysis models are proposed for estimation insightfully.
Though latent-variable graphical models or factor models allow a prominent reduction of edges, the edges between latent variables and observed variables are less explainable than original graphical models, because the latent variables lack practical meanings in general.

In this paper, based on the deterministic relation within a rank-deficient spectral density, we find the mathematical representation in identifying a low rank process under measurement noise exhibits a similar two-layer structure like latent-variable graphical models, which we name by low rank graphical model.  
Different from existing latent-variable graphical models, edges in a low rank graphical model are constructed between the measurement process and the full rank subprocess $y_l(t)$.
We shall prove in the paper that the S+L character holds for low rank graphical models, hence a similar identification framework applies.
Compared with latent-variable graphical models, two advantages of low rank graphical models are the more interpretable edges connected to latent nodes with practical meanings, and the topology determined directly from the deterministic relations.

To solve the identification problem of low rank processes under measurement noise, similarly as in the non-noise case in \cite{CPLauto23}, a two-stage algorithm on estimating low rank graphical models is proposed in this paper. 
In the first stage, an AR innovation model will be identified for the latent variable $y_l(t)$ by maximum entropy covariance extensions (see such as \cite{Georgiou02,Lin99,Picci11}).
In the second stage, the function corresponding to the deterministic relation, $H(z)$ will be estimated. 
Because of the measurement noise, the simple least squares problem in the non-noise case changes into a specific ARMAX graphical estimation problem, which is still lack of identification study to our best knowledge. 
In this paper, we initially solve this nonconvex problem by maximum likelihood ARMAX estimation (\cite{Hannan76,Hannan80,Chen93}). To guarantee the performance of the solution, the identifiability of the specific ARMAX model, and the consistency of the estimates with enough data is proven. The convergent performance is also verified by examples.

In addition, a by-product of low rank graphical identification algorithm is a high-accuracy filter providing estimates of $y(t)$ from its noisy measurements, which could be crucial for tasks that depend on reliable process data.

The introduction and identification of low rank graphical models in this work not only significantly increase the accuracy of low rank identification under noise,
but also compensate for the lack of interpretability in latent-variable graphical models and factor models.
Furthermore, they can also be directly utilized to identify key nodes in networks with unknown topologies, offering additional opportunities for exploration in the field of complexity science and its application areas.

The structure of this paper is as follows. In Section \ref{sec:notation} the notations and preliminaries of this paper are listed. 
Section \ref{sec:problem} introduces and specifies the low rank graphical estimation problem, with a framework of the overall algorithm given. 
In particular, subsection~\ref{subsec:problem_LRGmodel} and~\ref{subsec:problem_structure} discuss the definition, superiority, and the S+L character of low rank graphical models; subsection~\ref{subsec:problem_ARMAX} clarifies the ARMAX graphical estimation problem.
Section~\ref{sec:yl} proposes the first stage of our algorithm on identifying an innovation model for the low rank latent variable $y_l(t)$. 
Section~\ref{sec:ARMAX} proposes the second stage of our algorithm on low rank ARMAX graphical estimation, in which the identifiability, specific algorithm, and the convergence are discussed in subsection~\ref{subsec:ARMAX_identifiability},~\ref{subsec:ARMAX_ML} and ~\ref{subsec:ARMAX_convergence}, respectively.
Simulation examples and a sensitive analysis of the algorithm will be given in Section~\ref{sec:example}.
Finally, section~\ref{sec:conclu} is the conclusion of the paper.

\section{Notations and Preliminaries}\label{sec:notation}
The processes we consider in the whole paper are wide-sense stationary second-moment discrete-time stochastic processes. 
Basically, denote by 
$[M]_{kh}$ the entry on the $k$-th row, $h$-th column of matrix $M$. Denote by $\trace(M)$ the trace of matrix $M$. Denote by $M'$, $M^*$, respectively the transpose, transpose conjugate of matrix $M$,  $M^{-*}:= (M^{-1})^* = (M^{*})^{-1}$.
Denote by $\det M$ or $\det(M)$ the determinate of matrix $M$. 
Denote by $I$ an identity matrix.
Denote by signal $\otimes$ the (right) Kronecker product. 
Denote by matrix $K_{n,m}$ the commutation matrix of dimension $nm\times nm$ \cite[p. 54]{Magnusbook}, satisfying ${K_{n,m}}\text{vec}(M)= \text{vec}(M')$ for matrix $M\in \Rbb^{n\times m}$, where $\vecc(M)=\left[ [M]_{11}, \cdots, [M]_{n1},\cdots, [M]_{1m}, \cdots, [M]_{nm} \right]'$. 
Denote by $\unvec(x)$ the operator of recovering a matrix of given size from vector $x$, 
satisfying $\unvec(\vecc(M))=M$.
Denote by $|S|$ the cardinality of set $S$.
Define $z$ as the time-shift operator, satisfying $y(t+1)= zy(t)$ for process $y(t)$. 
Denote by $y_{(k)}(t)$ the $k$-th scalar subprocess of a vector process $y(t)$.
Denote by $N(\mu, \Gamma)$ a Gaussian distribution with expectation $\mu$ and variance $\Gamma$.

{\bf For vector spaces.} Denote by $\Qb_m$ the vector space of symmetric matrices of dimension $m$. For $X \in \Qb_m$, we denote by $X \succ 0$ (or $X \succeq 0$) if $X$ is positive definite (or semi-definite). Denote by $\Db_m$ the vector space of diagonal matrices of dimension $m$.
Denote by $\Vert \cdot \Vert_2$ the Euclidean norm of a vector or a matrix.

{\bf For measurement sequences.} Denote by $R_j := \E [\zeta(t+j)\zeta(t)']$ the $j$-th covariance lag of process $\zeta(t)$, then  given  $\{\zeta(1), \cdots, \zeta(N)\}$, an empirical estimate $\hat{R}_j$ can be computed as
\begin{align}\label{eq:hatRj}
  \hat{R}_j = \frac{1}{N-j} \sum_{t=1}^{N-j} \zeta(t+j)\zeta(t)'.
\end{align}
Denote a finite list of covariance lags by 
\begin{align}\nonumber
    \mathcal{R}:=\begin{bmatrix}R_0 & R_1 & \cdots & R_n \end{bmatrix},\\ \nonumber
    \hat{\mathcal{R}} := \begin{bmatrix}\hat{R}_0 & \hat{R}_1 &  \cdots & \hat{R}_n \end{bmatrix},
\end{align}
except as otherwise stated.

Given the covariance lags $R_0, R_1, \cdots, R_n$ in $\Rbb^{m\times m}$, define the matrix pseudo-polynomial
\begin{align}
  R(z) = R_0 + \sum_{j=1}^{n}(z^{j}R_j + z^{-j}R_j^*).
\end{align} 
and a family $\mathcal{Q} (m, n)$ for it,
\begin{align}
\begin{split}
  \mathcal{Q}(m,n)= \{Q(z) &= \sum_{j=-n}^{n}z^{j}Q_j: \\
     & Q_j =Q_{-j}^*\in \mathbb{R}^{m\times m} ,~\forall \theta \in [-\pi, \pi] \}.
\end{split}
\end{align}
Given $\mathcal{R}$, the block Toeplitz matrix corresponding to $R$ is
\begin{align}
  T(R) = \begin{bmatrix}
           R_0 & R_1 & \cdots & R_n \\
           R_1' & R_0 & \ddots & \vdots \\
           \vdots & \ddots & \ddots & R_1 \\
           R_n' & \cdots & R_1' & R_0
         \end{bmatrix}.
\end{align}
If a matrix $X\in \Qb_{m(n+1)}$ is partitioned as
\begin{align}\label{eq:X}
  X=\begin{bmatrix}
    X_{00} & X_{01} & \cdots & X_{0n} \\
    X_{01}' & X_{11} & \cdots & X_{1n} \\
    \vdots  & \vdots &   & \vdots \\
    X_{0n}' & X_{1n}' & \cdots & X_{nn}
  \end{bmatrix},
\end{align}
then define $D(X):= [D_0(X), \cdots, D_n(X)]$, with
\begin{align}
    D_0(X)=\sum_{h=0}^{n}X_{hh},~D_j(X)=2\sum_{h=0}^{n-j}X_{h,h+j},~j=1,\cdots,n.
\end{align}

{\bf For function spaces.} Denote by $\Lb_2(\Tbb)$ the linear space of $\Cbb^{m\times m}$ matrix functions on the unit circle, where $\Tbb$ is the unit circle. For any function $\Xi(e^{i\theta}) \in \Lb_2(\Tbb)$, we denote $\Xi(e^{i\theta})$ positive definite (or semi-definite) for $\theta \in [-\pi, \pi]$ by $\Xi(e^{i\theta}) \succ 0$ (or $\Xi(e^{i\theta}) \succeq 0$).
Denote by $\mathcal{S}_+^l$ the class of $l\times l$ positive definite spectral densities that are integrable on $[-\pi, \pi]$. The spectral density $\Phi(e^{i\theta})$ of a stationary stochastic process $\zeta(t)$ given its covariance lags is
\begin{align}
    \Phi(e^{i\theta})=\sum_{k=-\infty}^{\infty} R_k e^{ik\theta},
\end{align}
and the covariance lags can be given from 
\begin{align}
    R_k = \int_{-\pi}^{\pi} \Phi(e^{i\theta})e^{ik\theta}  \frac{\text{d} \theta}{2\pi}.
\end{align}

Define the norm of a function $A(e^{i\theta})$ in $\Lb_2(\Tbb)$, as
\begin{align}\nonumber
  \Vert A(e^{i\theta})\Vert = \sup_{\theta \in [-\pi, \pi]}~\sigma_1(A(e^{i\theta})),
\end{align}
where $\sigma_1(A(e^{i\theta}))$ denotes the largest singular value of $A(e^{i\theta})$ at $\theta$.
Define the inner product for two functions in $\Lb_2(\Tbb)$, as
\begin{align}
  \left<A(e^{i\theta}), B(e^{i\theta}) \right> = \int_{-\pi}^{\pi} \trace\left( A(e^{i\theta})B(e^{i\theta})^*\right)\frac{\text{d} \theta}{2\pi}.
\end{align}
Hence the inner product of two constant real matrices is $\left<A, B\right>=\trace (AB')$.
Define the shift operator as 
\begin{align}
  \Delta(e^{i\theta}):= \begin{bmatrix}
                        I & e^{i\theta}I & \cdots & e^{in\theta}I
                      \end{bmatrix}.
\end{align}
Given $X\in \Qb_{m(n+1)}$ in \eqref{eq:X}, by direct computation we have
\begin{align}\label{eq:DeltaXDelta*}
\begin{split}
  \Delta(e^{i\theta}) X \Delta(e^{i\theta})^* &= D_0(X) + \\ 
  & \frac{1}{2}\sum_{j=1}^{n} \left( e^{-ij\theta}D_j(X)+ e^{ij\theta}D_j(X)' \right),
\end{split}
\end{align}
hence $\Delta(e^{i\theta}) X \Delta(e^{i\theta})^* \in \mathcal{Q}(m,n)$.
On the other hand, any element in $\mathcal{Q}(m,n)$ can be expressed by \eqref{eq:DeltaXDelta*}, because $D$ is surjective, leading to
\begin{align}
    \mathcal{Q}(m,n) = \{\Delta(e^{i\theta}) X \Delta(e^{i\theta})^*,~X\in \Qb_{m(n+1)} \}.
\end{align}
In the sequel, we shall often suppress the argument $z$, $z^{-1}$ or $e^{i\theta}$ whenever there is no risk of misunderstanding.

\section{Problem description}\label{sec:problem}
Considering an $(m+l)$-dimensional ($m,l>0$) Gaussian low rank vector identification problem in discrete time, with the measurement model
\begin{align}\label{eq:meas_model}
  \zeta(t) & = y(t) + e(t),
\end{align}
where $\{\zeta(t), {\rm for}~t = 1, \cdots, N\}$ is the observed sequence, $y(t)$ is the Gaussian low rank vector process we want to estimate with zero-mean and an $l$-rank spectral density matrix $\Phi_y(z)$, and $e(t)$ is an $(m+l)$-dimensional zero-mean white Gaussian noise process of covariance matrix $\sigma^2 I$, independent with $y(t)$, with all its entries independent. 
Without loss of generality, we suppose $m> l$ in this paper for problems of a higher dimension. 
Note that in some literature on factor analysis, $e(t)$ is supposed to be an idiosyncratic sequence (see such as \cite{P23}). 
However, as what we shall discuss later, the idiosyncratic assumption can be relaxed to a more basic and general requirement to obtain a consistent estimate.


The problem in this paper is identifying the low rank process $y(t)$, given finite data of the noisy sampling process $\zeta(t)$, i.e., identifying the innovation model for a low rank process based on noisy samplings. 
To the best of our knowledge, this very common phenomenon in practice has not been discussed for the identification of low rank processes.
The issue is 
because of the hidden of the rank deficient spectrum, and possibly emergence of colored noise in estimation resulting from the low rank character.

\subsection{Low rank graphical model}\label{subsec:problem_LRGmodel}
Given the decomposition of $y(t)$ in \eqref{eq:y=ymyl}, 
with $y_l(t)$ of dimension $l$ and a full-rank spectral density matrix $\Phi_{y_l}(z)$, 
the spectrum of $y(t)$ can be partitioned similarly as
\begin{align}\nonumber
  \Phi_y(z) = \begin{bmatrix}
                \Phi_{y_m}(z) & \Phi_{y_{lm}}(z)^* \\
                \Phi_{y_{lm}}(z) & \Phi_{y_l}(z)
              \end{bmatrix},
\end{align}
with $\rank(\Phi_{y_l})=l$.

From our previous work such as \cite{CLPtac23}\cite{CPLauto23}, a deterministic relation always exists between $\Phi_{y_{lm}}(z)$ and $\Phi_{y_{l}}(z)$. 
And there is a {\bf special feedback model} \cite{CLPtac23}, describing the internal relationship between $y_m(t)$ and $y_l(t)$,
\begin{subequations}\label{eq:specialFB}
\begin{align}\label{eq:ym=Hyl}
  y_m(t) &= H(z)y_l(t),\\  \label{eq:yl=Fym}
  y_l(t) &= F(z)y_m(t) + \varepsilon_l(t),
\end{align}
\end{subequations}
where the deterministic relation 
$$
H(z)= \Phi_{y_{lm}}(z)\Phi_{y_{l}}(z)^{-1}, 
$$ 
is casual uniquely determined from $\Phi_y$, 
and $F(z)$ is a (strictly) causal function which can be determined from a minimal realization of $y(t)$ \cite{CLPtac23} or by a one-step Wiener predictor \cite{CPLauto23}.


An illustrative diagram of a special feedback model is given in Fig.~\ref{fig:specialFB}. For processes with invertible spectral densities (i.e., full-rank processes), noise processes $\varepsilon_l$ and $\varepsilon_m$ are both non-zero in general for this feedback structure. 
However, because of the inner correlation, feedback models for low rank processes admit a deterministic forward loop, making the possibly high-dimensional noise $\varepsilon_m\equiv 0$.
Therefore, $y=[y_m',~y_l']'$ can be determined entirely from $y_l$.

\begin{figure}[ht]
\centering
  \includegraphics[scale=0.42]{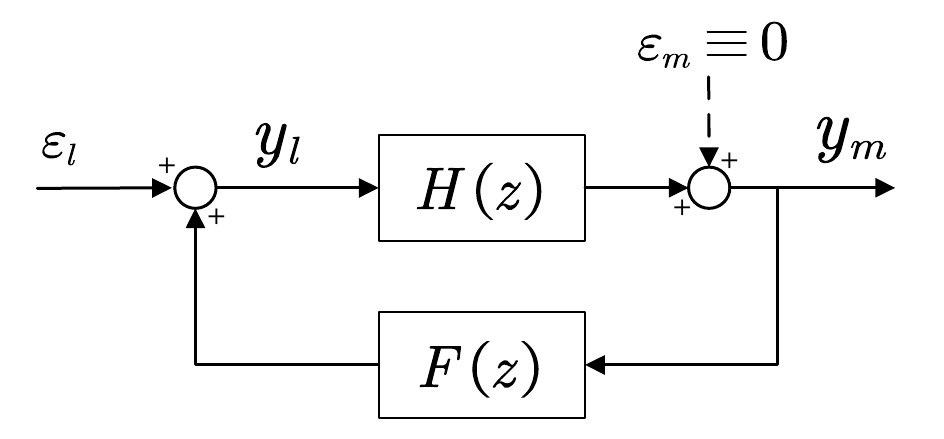}
  \caption{The structure block diagram of a special feedback model \eqref{eq:specialFB}.} \label{fig:specialFB}
\end{figure}

Drag \eqref{eq:ym=Hyl} into the measurement model \eqref{eq:meas_model}, we have
\begin{align}\label{eq:zeta=yl+e}
  \zeta(t) = \begin{bmatrix}
               H(z) \\
               I
             \end{bmatrix}y_l(t) + e(t).
\end{align}
We name \eqref{eq:zeta=yl+e} a {\bf low rank graphical model}, for it obviously corresponds to a latent-variable graphical model \cite{Dahlhaus00, Zorzi16}, where $y_l(t)$ is the lower-dimensional  latent variable. And we name $y_l(t)$ the {\bf low rank latent variable} of this graphical model.
\eqref{eq:zeta=yl+e} can also be seen as a dynamic factor analysis model \cite{EJC10,FAbook10,P23} common in microeconomics, with $y_l$ its minimal factor process.

The model illustration diagrams in identifying \eqref{eq:meas_model} based on low rank graphical models are shown in Fig.~\ref{fig:LowRankGraphical}. 
For a possibly high-dimensional inter-correlated process $y(t)$ measured under noise, the structure block diagram of a general measurement model \eqref{eq:meas_model} is shown in Fig.~\ref{fig:LowRankGraphical} (a). 
By utilizing the low rank characteristic, the measurement model is dimensionally reduced and efficiently simplified into a low rank graphical model \eqref{eq:zeta=yl+e} as shown in Fig.~\ref{fig:LowRankGraphical} (b). $\zeta(t)$ only depends on an $l$-dimensional process, rather than depends on the $(m+l)$-dimensional low rank process $y(t)$. 
Low rank graphical model simplifies the presentation of $\zeta(t)$ and hence will make the estimation procedures easier to be realized. 

{\begin{figure*}[t]
\centering        
    \subfigure[Measurement model]{
        \includegraphics[scale=0.5]{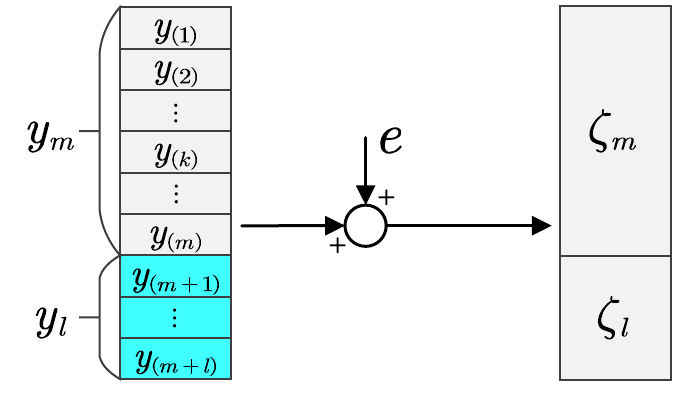}} \hspace{1.6cm}
    \subfigure[Low rank graphical model]{
        \includegraphics[scale=0.5]{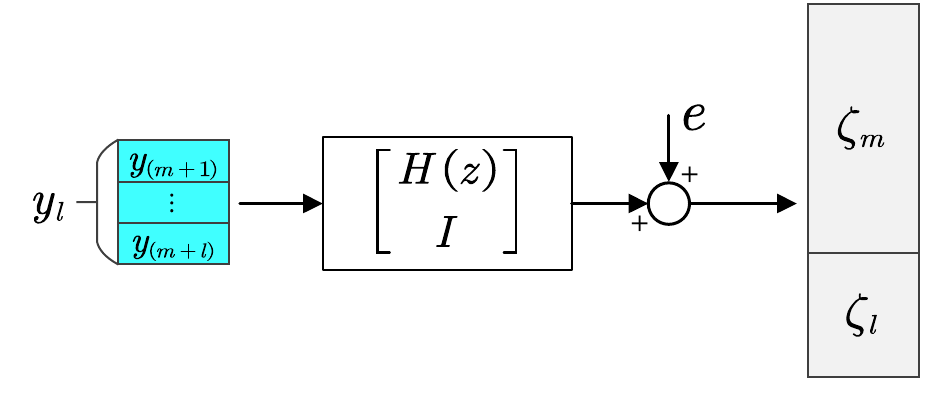}}
    \caption{Structure block diagrams of measurement model \eqref{eq:meas_model} and low rank graphical model \eqref{eq:zeta=yl+e}. By utilizing the low rank characteristic, measurement process $\zeta(t)$ only depends on an $l$-dimensional low rank latent variable $y_l(t)$, rather than the higher-dimensional process $y(t)$. $y_l(t)$ can be seen as the key factor of the hidden low rank process $y(t)$.}
    \label{fig:LowRankGraphical}
\end{figure*}}

Note that the identification problem on \eqref{eq:meas_model} seems similar to the problem of identifying a low rank model with an external input in \cite[Section 6]{CPLauto23}, which is also thoroughly discussed in a series of articles \cite{WVD18}\cite{WVD18-2}, by regarding $y(t)$ as an external input.
But the two problems are quite different since here we have no direct access to the true data of $y(t)$.

We claim that it is worthy studying the estimation of low rank graphical models. 
Different from conventional latent-variable graphical models, $y(t)$ and hence the low rank latent variable $y_l(t)$ in \eqref{eq:zeta=yl+e} are obviously of physical meanings, as the 'real' processes hidden behind the noisy measurements. 
Here the entries of $y_l(t)$ can be seen as the key factors of the low rank process $y(t)$ in problem analysis. 
For amounts of scientific and technology problems, exploring a lower-dimensional  key process with real physical meaning is a crucial mission. 
In contrast, traditional latent-variable graphical models can only extract latent variables of no physical meanings in general, leading that the graphical structure carry less information than low rank graphical models.
It is absolutely more easier to analyze the low rank variable $y_l(t)$ by expert knowledge and measured data from real world,  than an extracted latent variable of no specific meaning. And hence it would be easier to analyze $y_m(t)$ and $y(t)$ based on the extra measured data of $y_l(t)$. 

Special feedback models provide an efficient representation of the internal correlations in low rank processes, by revealing the algebraic (i.e., parametric) and geometric (i.e., topological) relations meantime. 
Based on special feedback models, \eqref{eq:zeta=yl+e} provides a prediction model with respected to (w.r.t.) a minimum-dimensional  latent variable. Deterministic relation $H(z)$ is uniquely determined once $y_l$ is determined \cite{CPLauto23,CLPtac23}. 
In addition, the topology of this low rank graphical model can be determined directly based on the topology in special feedback models as we shall discuss in the following.
Therefore, no further significant effort is required for topology estimation in low rank graphical estimation.

{\subsection{Sparse plus low rank structure}\label{subsec:problem_structure}}
The low rank graphical model \eqref{eq:zeta=yl+e} admits a simplified S+L structure for identification. 
Since $e(t)$ is i.i.d. and independent of $y(t)$, the spectrum of $\zeta(t)$ and its (block) diagonal entries are evidently invertible. 
Denote a block partition of the spectral density of $\zeta(t)$ and its inverse matrix by
\begin{align}\label{eq:Phi,invPhi}
  \Phi=\begin{bmatrix}
         \Phi_{m} & \Phi_{lm}^* \\
         \Phi_{lm} & \Phi_{l}
       \end{bmatrix}, \quad \Phi^{-1}=\begin{bmatrix}
                                        \Upsilon_{m} & \Upsilon_{lm}^* \\
                                        \Upsilon_{lm} & \Upsilon_{l}
                                      \end{bmatrix},
\end{align}
where $\Phi_{m}\in \mathcal{S}_+^{m}$. From Schur complement we have  
\begin{align}\label{eq:invPhi_m}
  \Phi_{m}^{-1} =  \Upsilon_{m} - \Upsilon_{lm}^* \Upsilon_l^{-1} \Upsilon_{lm}.
\end{align}
Then we have the following result.

\begin{thm}\label{lem:S+L}
Equation \eqref{eq:invPhi_m} for the low rank graphical model \eqref{eq:zeta=yl+e}, with $y_l(t)$ the low rank latent variable of dimension $l$, admits a special S+L structure,
i.e., a diagonal plus low rank (D+L) structure,
\begin{align}
   \Phi_{m}^{-1}= \Sigma - \Lambda,\quad \Lambda \succeq 0,
\end{align}
where $\Sigma = \Upsilon_m$ is a diagonal function matrix and $\Lambda = \Upsilon_{lm}^* \Upsilon_l^{-1} \Upsilon_{lm}$ is low-rank. 
\end{thm}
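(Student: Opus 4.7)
The plan is to apply the Woodbury matrix inversion identity directly to $\Phi_m$, which under the low rank graphical model \eqref{eq:zeta=yl+e} appears as a scalar matrix plus a low-rank PSD perturbation, and then match the two resulting pieces to the Schur-complement expressions in \eqref{eq:invPhi_m}.

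First, since $e(t)$ is white of covariance $\sigma^2 I$ and is independent of $y(t)$, the top-left block of the spectrum reads
\[
\Phi_m(z) = H(z)\Phi_{y_l}(z)H(z)^* + \sigma^2 I_m,
\]
i.e., $\sigma^2 I_m$ plus a PSD perturbation of rank at most $l$. The Woodbury identity then yields
\[
\Phi_m^{-1} = \sigma^{-2} I_m - \sigma^{-4}\,H\bigl(\Phi_{y_l}^{-1} + \sigma^{-2}H^*H\bigr)^{-1}H^*.
\]
The first summand is diagonal, and the second is manifestly PSD (because $\Phi_{y_l}^{-1}+\sigma^{-2}H^*H \succ 0$ on the unit circle) and of rank at most $l$ (it factors through the $l\times l$ inner matrix). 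This is already the desired D+L decomposition $\Phi_m^{-1}=\Sigma-\Lambda$, with the correct rank and positivity properties for $\Lambda$.

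To identify the two pieces with $\Upsilon_m$ and $\Upsilon_{lm}^*\Upsilon_l^{-1}\Upsilon_{lm}$, I would next compute the block inverse of the joint manifest/latent spectrum
\[
\begin{bmatrix} H\Phi_{y_l}H^* + \sigma^2 I_m & H\Phi_{y_l} \\ \Phi_{y_l}H^* & \Phi_{y_l} \end{bmatrix}
\]
induced by \eqref{eq:zeta=yl+e} with the latent block taken to be $y_l$ itself. Schur-complementing out the $\Phi_{y_l}$ block cancels the $H\Phi_{y_l}H^*$ term exactly, giving $\Upsilon_m = \sigma^{-2} I_m$; routine block-inverse bookkeeping then yields $\Upsilon_{lm} = -\sigma^{-2}H^*$ and $\Upsilon_l = \Phi_{y_l}^{-1} + \sigma^{-2}H^*H$, and substituting these into $\Upsilon_{lm}^*\Upsilon_l^{-1}\Upsilon_{lm}$ reproduces $\Lambda$. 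Combining with the Schur identity \eqref{eq:invPhi_m} closes the argument.

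The mathematical core is a single Woodbury step. The main point of care is the identification of the latent block with the noiseless $y_l$ (rather than the noisy $\zeta_l$) in the partition of $\Phi$: this is precisely what makes the Schur complement collapse to the diagonal $\sigma^{-2} I_m$ instead of picking up an additional rank-$l$ correction from the latent-side noise, and it is the place where the low-rank graphical viewpoint (manifest coupled to a deterministic low-dimensional latent via $H$) is actually used.
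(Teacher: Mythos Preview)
Your Woodbury computation is correct and establishes the D+L form of $\Phi_m^{-1}$ directly; this is a genuinely different route from the paper, which argues via conditional independence in the graphical model (invoking the equivalence $[\Phi^{-1}]_{kh}=0 \Leftrightarrow \zeta_{(k)}\perp\zeta_{(h)}\mid \Xb_{V\setminus\{k,h\}}$ to conclude that the off-diagonal entries of $\Upsilon_m$ vanish). Your approach is more elementary and, as a bonus, produces the explicit pieces $\Sigma=\sigma^{-2}I_m$ and $\Lambda=\sigma^{-4}H(\Phi_{y_l}^{-1}+\sigma^{-2}H^*H)^{-1}H^*$.

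There is, however, a real discrepancy you should be aware of. The paper defines $\Phi$ in \eqref{eq:Phi,invPhi} as the spectrum of $\zeta=[\zeta_m;\zeta_l]$, so its $\Upsilon_m$ is the top-left block of $\Phi_\zeta^{-1}$, whereas you deliberately work with the joint spectrum of $(\zeta_m,y_l)$. Your closing paragraph flags this as ``the main point of care'', and rightly so: with the paper's literal partition, $\Upsilon_m$ is in general \emph{not} diagonal. Take $m=2$, $l=1$, $H=(1,1)'$, $\Phi_{y_l}=1$, $\sigma^2=1$; then a direct computation gives $[\Phi_\zeta^{-1}]_{12}=-1/4\neq 0$. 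Correspondingly, the step in the paper's proof asserting that the residual of $\zeta_{(j)}$ after projection onto $\Xb_{V_l}=\Span\{\zeta_l(t):t\in\Zbb\}$ equals $\Span\{e_{(j)}(t):t\in\Zbb\}$ fails, since $y_l$ is not exactly recoverable from the noisy $\zeta_l$. Replacing the latent block by the noiseless $y_l$---precisely your move---is what makes the Schur complement collapse to $\sigma^{-2}I_m$ and the diagonal claim hold. So your reinterpretation is not a cosmetic convenience but the substantive fix: you have proved the intended D+L result and, in doing so, located a gap in the statement/proof as literally written.
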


\begin{proof}
To prove the $m\times m$ matrix $\Lambda$ is low-rank, just remind our assumption $m>l$, and the rank of $\Lambda$ is no greater than its factor $\Upsilon_l^{-1}$, i.e., $\rank(\Lambda) \leq \rank(\Upsilon_l^{-1}) \leq l$.

Then we prove $\Sigma$ is diagonal.
Denote the topology of a graph by $(V, E)$, where $E \subseteq V \times V$ is the set of edges,  $V:=\{1,2, \cdots, m+l\}$ is the set of vertices, and denote by $V_l :=\{m+1, \cdots, m+l\} $.
Denote by $\Xb_S = \Span\{\zeta_{(i)}(t):~i\in S,  t\in \Zbb \}$ the vector space generated by $\zeta_{(i)}(t)$ for $i \in S$, node set $S \subseteq V$.  
For any $k, h \in V$ with $k\neq h$, the shorthand notation
\begin{align}\label{eq:condorthognal}
  \Xb_k \perp \Xb_h~\vert~ \Xb_{V\backslash\{k,h\}}
\end{align}
denotes that $\Xb_k$ and $\Xb_h$ are conditionally independent given $\Xb_{V\backslash\{k,h\}}$.

From \cite{Lin13}, we have
\begin{align}\label{eq:condIndep}
\begin{split}
  (k, h) \notin E  & \Leftrightarrow k \neq h,  \Xb_k \perp \Xb_h~\vert~ \Xb_{V\backslash\{k,h\}} \\
   & \Leftrightarrow [\Phi(e^{i\theta})^{-1}]_{kh} = 0, \text{for~} \theta \in [-\pi, \pi].
\end{split}
\end{align}
Partition $H(z)$ as
\begin{align}\label{eq:Hpartition}
  H(z) =\begin{bmatrix}
          H_1(z) \\
          \vdots \\
          H_m(z)
        \end{bmatrix},
\end{align}
from \eqref{eq:zeta=yl+e} we have 
\begin{align}\label{eq:zeta_(i)}
  \zeta_{(j)}(t) = H_j(z)y_l(t) + e_{(j)}(t),\quad \text{for~}j\leq m.
\end{align}
Then for $k \neq h, k, h\leq m$,
\begin{align}\nonumber 
\begin{split}
  \Xb_j  - \mathbb{E}^{\Xb_{V\backslash \{k,h\}}}&\Xb_j =  \Xb_j - \mathbb{E}^{\Xb_{V_l}} \Xb_j \\
   &= \Span\{e_{(j)}(t), t\in \Zbb\},\quad \text{for~}j=k,h.
\end{split}
\end{align}
And obviously $\Span\{e_{(k)}(t), t\in \Zbb\} \perp \Span\{e_{(h)}(t), t\in \Zbb\}$.
Hence we have
\begin{align}
\begin{split}
   <\alpha - \mathbb{E}^{\Xb_{V\backslash \{k,h\}}}\alpha,~ &
  \beta  - \mathbb{E}^{\Xb_{V\backslash \{k,h\}}}\beta> = 0, \\
  &~ \forall \alpha \in \Xb_k, \forall \beta \in \Xb_h,
\end{split}
\end{align}
meaning \eqref{eq:condorthognal} holds \cite[pp. 38-40]{LPbook}, and hence $\Sigma$ is diagonal.
\end{proof}

Besides Theorem~\ref{lem:S+L}, it is straightforward to prove that $\Upsilon_l$ is diagonal as well.  
Note that, in a traditional S+L structure, matrix $\Sigma$ is sparse with non-zero entries instead of diagonal. Our feedback modeling of low rank process $y(t)$ simplifies the structure and hence the further calculations.
Then the space of $\Phi_m^{-1}$ can be decomposed into the following two subspaces,
\begin{subequations}\label{eq:VEmVG}
\begin{align}
  \Vb_{Em} &= \{\Delta X\Delta^* \in \mathcal{Q}(m,n), X\in \Db_{m(n+1)} \}, \\
  \Vb_{G}  &= \{ \Delta GG'\Delta^*,~\text{s.t.}~G \in \Rbb^{m(n+1)\times l}  \},
\end{align}
\end{subequations}
where $X\in \Db_{m(n+1)}$ is naturally diagonal by Theorem~\ref{lem:S+L}.

Therefore, the topology of a low rank graphical model is simplified to a large extent compared with a general graphical or network model.
One example for the topology of a low rank graphical model is shown in Fig.~\ref{fig:exmTopology}. 
Based on the special feedback model \eqref{eq:specialFB}, the topology of a general low rank graphical model \eqref{eq:zeta=yl+e} can be expressed by a latent-variable graph (or a directed tree) $(V^{\zeta}, V^{y_l}, E)$, with measured (or child) node set
\begin{subequations}
\begin{align}
  \zeta_{(h)} \in V^{\zeta}= \{1,2, \cdots ,m+l \}, ~~\text{for~} h=1, \cdots,m+l,
\end{align}
low rank latent-variable (or root) node set
\begin{align}\label{eq:Vyl}
    y_{l(k)} \in V^{y_l}= \{1,2, \cdots ,l \}, ~~\text{for~} k=1, \cdots, l,
\end{align}
and $E \subseteq V^{\zeta} \times V^{y_l}$ the directed edge set from $V^{y_l}$ to $V^{\zeta}$, satisfying the sparsity condition
\begin{align}\label{eq:edgesLRG}
    (h,k) \notin E, \text{~~for~} h\neq j+m \text{~and~} ~h>m, 
\end{align}
\end{subequations}
where the special D+L structure is clearly demonstrated by the sole existing edge of $\zeta_{(h)}$ from $y_{(h)}$ for $h>m$, as also depicted in Fig.~\ref{fig:exmTopology}. 

\begin{figure}[ht]
  \centering
  \includegraphics[scale=0.46]{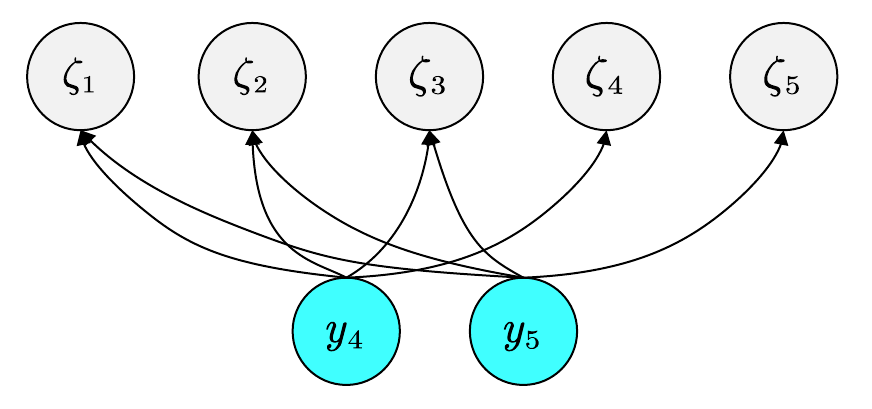}
  \caption{The topology of a low rank graphical model \eqref{eq:zeta=yl+e}, with $\zeta(t)$, $y(t)$, $y_l(t)$ of dimension $5$, $5$, $2$ respectively. }\label{fig:exmTopology}
\end{figure}

From Theorem~\ref{lem:S+L}, the identification of low rank vector processes under noise is actually a S+L estimation problem, or more precisely, a D+L estimation problem on the low rank graphical model \eqref{eq:zeta=yl+e}. 
Traditionally, the topology of a latent-variable graphical model requires to be estimated during the S+L estimation procedure. 
However, in our case, the general topology of the low rank graphical model is directly obtained from a special feedback model, eliminating the need for topology estimation.

Several studies have been conducted on the estimation of latent-variable graphical models effectively, see such as \cite{Zorzi16,Yu23cdc,Yu24auto}. 
Existing research initially focused on estimating AR \cite{Zorzi16,Yu24auto} and later expanded to ARMA latent-variable structures \cite{Yu23cdc}, often assuming simplified polynomial matrices (like constant matrices rather than general polynomials), to facilitate deduction and computation in algorithm design. 
However, we shall show in the subsequent discussion that low rank graphical models indeed admit a specific ARMAX graphical structure, rendering the existing techniques inapplicable due to the lack of accounting for both the moving average and exogenous input components in estimation.
In the paper, we address the identification issue of low rank graphical models, including proposing an algorithm for solving the ARMAX graphical estimation problem. 
This also offers another angle for effectively estimating general ARMAX latent-variable graphical models. 
The corresponding identification algorithm will be proposed in the following sections.

\subsection{Low rank ARMAX graphical model}\label{subsec:problem_ARMAX}
Denote $H(z)$ by
\begin{equation}\label{eq:H=A-1B}
H(z)=A(z^{-1})^{-1}B(z^{-1}),
\end{equation}
where $A(z^{-1})$ and $B(z^{-1})$ are $m\times m$, $m \times l$ left-coprime polynomial matrices of $z^{-1}$. Without loss of generality, suppose $A(z^{-1})$ is invertible and monic. 
Partition $\zeta(t)$ and $e(t)$, 
\begin{align}
  \zeta(t)= \begin{bmatrix}
             \zeta_m(t) \\
              \zeta_l(t)
            \end{bmatrix}, \quad e(t)=\begin{bmatrix}
             e_m(t) \\
              e_l(t)
            \end{bmatrix},
\end{align}
similarly as in \eqref{eq:y=ymyl}.
Since the data $\zeta(t)$ can be seen as the measurements of $y(t)$ under noise, a least square identification procedure \cite{Ljung} can be directly used to estimate $H(z)$, from
\begin{align}\label{eq:H(z)ARMAX}
  A(z^{-1})\zeta_m(t) = B(z^{-1})\zeta_l(t) + v(t),
\end{align}
where $v(t)$ is the estimation error process of the identification procedure.
This approach has been proven to have a good performance converging to the true value of $H(z)$ given the degrees, in the absence of noise $v(t)$ in our previous work \cite{CPLauto23}.

However, as we shall demonstrate in the following analysis and through Example~1 in Section~\ref{sec:example},
identifying by least squares yields only rough estimates. 
The primary reason for this limitation is that $v(t)$ actually is not a white noise.
From \eqref{eq:zeta=yl+e} we have
\begin{align}\nonumber
\begin{split}
  \begin{bmatrix}
    A(z^{-1}) &  \\
     & I
  \end{bmatrix}\begin{bmatrix}
                 \zeta_m(t) \\
                 \zeta_l(t) 
               \end{bmatrix} = & \begin{bmatrix}
                           B(z^{-1}) \\
                           I
                         \end{bmatrix}y_l(t) \\ 
                          & +  \begin{bmatrix} A(z^{-1}) &  \\ & I\end{bmatrix}\begin{bmatrix}
                 e_m(t) \\
                 e_l(t) 
               \end{bmatrix},
\end{split}
\end{align}
and hence
\begin{subequations}\label{eq:H(z)2lines}
\begin{align}\label{eq:ARMAX}
A(z^{-1})\zeta_m(t) &= B(z^{-1})y_l(t) + A(z^{-1})e_m(t), \\
\label{eq:zetal=yl+el}
 \zeta_l(t) &=y_l(t)+  e_l(t).
\end{align}
\end{subequations}
Then we obtain an ARMAX-like relation between the subprocesses of the observed process $\zeta(t)$, i.e.,
\begin{align}\nonumber 
  A(z^{-1})\zeta_m(t)=B(z^{-1})\zeta_l(t) + A(z^{-1})e_m(t)-B(z^{-1})e_l(t),
\end{align}
where we find that the error $v(t)$ in \eqref{eq:H(z)ARMAX} equals to
\begin{align}\nonumber
  v(t)= A(z^{-1})e_m(t)-B(z^{-1})e_l(t),
\end{align}
which is determined by two white-noise vector subprocesses and the parameter polynomial matrices. 
Least squares hence can not lead to a consistent estimate based on such a non-white noise.

Nonetheless, based on \eqref{eq:H(z)2lines} we are able to decompose the low rank graphical identification challenge into two subproblems in sequel:
\begin{enumerate}
  \item Identifying an innovation model for the low rank latent variable $y_l(t)$ based on noisy measurement $\zeta_l(t)$ in \eqref{eq:zetal=yl+el}.
  \item Identifying $A(z^{-1}), B(z^{-1})$ based on the estimates of $y_l(t)$ in the specific ARMAX model \eqref{eq:ARMAX} .
\end{enumerate}

\section{Identification of low rank latent variable $y_l$}\label{sec:yl}

From \eqref{eq:zetal=yl+el}, since $y(t)$ is independent of the i.i.d. process $e(t)$, the covariance lags of $\zeta_l(t)$ are
\begin{align}\nonumber
\begin{split}
    R_j^{l} &= \E \left[ \left(y_l(t+j)+  e_l(t+j)\right)\left(y_l(t)+  e_l(t)\right)' \right] \\
    & = \E \left[ y_l(t+j)y_l(t)' \right] + \E \left[ e_l(t+j)e_l(t)' \right],
    \end{split}
\end{align}
for $j=1,2, ..., n$. 
Hence, the $n$-dimensional covariance lag vector of $\zeta_l$ satisfies
\begin{align}\label{eq:Rcal}
    \mathcal{R}^l = \mathcal{R}^{y_l} + \begin{bmatrix}
                      \sigma^2 I & 0 & \cdots & 0 
                    \end{bmatrix}.
\end{align}
Although $y_l(t)$ is measured under noise and not accessible, by \eqref{eq:Rcal} its covariance lags can be calculated from those of the noisy measurement process. 
Therefore, an estimation approach based on the covariance extension of $y_l$ can be proposed for estimating low rank latent variables.

\subsection{Maximum entropy covariance extension}
Given a series of measurement data $\{\zeta(1), \cdots, \zeta(N) \}$, specifically, we shall use maximum entropy method for covariance extension to estimate the spectrum and an innovation model for $y_l$. 
Then, the maximum entropy covariance extension problem of $y_l$ 
can be expressed by a strictly convex program 
\begin{subequations}\label{eq:maxiEntropy}
\begin{align}
  &\hat{\Phi}_{y_l}^\circ (e^{i\theta})= \mathop{\arg \max}\limits_{\Phi_{y_l} \in \mathcal{S}_+^l} \int_{-\pi}^{\pi} \log \det \Phi_{y_l}(e^{i\theta}) \frac{\text{d} \theta}{2\pi}, \\
  & \text{subject to}\quad\int_{-\pi}^{\pi}  \Delta(e^{i\theta})\Phi_{y_l}(e^{i\theta}) \frac{\text{d} \theta}{2\pi} = \hat{\mathcal{R}}^{y_l},
\end{align}
where
\begin{align}\label{eq:R_yl}
    \hat{\mathcal{R}}^{y_l}:=\hat{\mathcal{R}}^{l} - \begin{bmatrix}
                      \sigma^2 I & 0 & \cdots & 0 
                    \end{bmatrix},
\end{align}
\end{subequations} 
denotes the estimated covariance lags of $y_l$, $\hat{\mathcal{R}}^{l}$ denotes the estimated covariance lags of process $\zeta_l(t)$, and  $\hat{\Phi}_{y_l}^\circ (e^{i\theta})$ is named the maximum-entropy covariance extension. Note that $\hat{\mathcal{R}}^{y_l}$ of $y_l(t)$ is defined differently from the general estimated covariance lags in \eqref{eq:hatRj} for lack of the real data.
Denote by $\hat{R}_j$ the $j$-th estimated covariance lag of $y_l(t)$ in \eqref{eq:R_yl}.

Then referring to \cite{GraphicalBook10} or \cite{Lin13}, the dual of problem \eqref{eq:maxiEntropy} is 
\begin{subequations}\label{eq:dualMaxiEntropy}
\begin{align}
   \min\limits_{\Phi_{y_l}^{-1}\in \mathcal{Q}(l,n)}~ &-\int_{-\pi}^{\pi} \log \det \Phi_{y_l}^{-1} \frac{\text{d} \theta}{2\pi} + \left< \Phi_{y_l}^{-1}, \hat{\Phi}_{y_l}\right> ,\\
   \text{subject to}\quad &\Phi_{y_l} \succ 0,
\end{align} 
\end{subequations} 
where $\Phi_{y_l}^{-1}$ is exactly the Lagrangian multiplier, $\hat{\Phi}_{y_l} := \sum_{j=-n}^{n}e^{-ij\theta}\hat{R}_j$ is the $n$-length windowed correlogram of $y_l$. 

Then a specific estimation approach can be used to estimate $y_l(t)$. There have been quite a few references on estimating through maximum entropy covariance extensions methods, such as \cite{Georgiou02,Lin99,Picci11}.

\subsection{AR graphical estimation}
In this paper, to obtain the estimated time series of $y_l(t)$ during the sampling period, we identify an AR innovation model for $y_l(t)$,
\begin{align}\label{eq:ARg}
    P_0 y_l(t) = - \sum_{j=1}^{n} P_j y_l(t-j) + w(t),
\end{align}
with $P_0$ 
invertible, $w(t)$ an i.i.d. $l$-dimensional zero-mean Gaussian white noise with variance matrix $I$. 
Note that, we choose the above model instead of restricting $P_0=I$, to avoid estimating the variance matrix of the innovation process $w(t)$.

To simplify the calculations in solving the dual problem \eqref{eq:dualMaxiEntropy}, we introduce the following topology restriction to make the multiplier $\Phi_{y_l}^{-1}$ as sparse as possible. 
Suppose we are given an internal interaction graph of $y_l(t)$ itself, $(V^{y_l}, E^{y_l})$, with $y_{l(k)}\in V^{y_l}$ given in \eqref{eq:Vyl}. 
From \eqref{eq:condIndep}, if $y_{l(k)}$ and $y_{l(h)}$ are conditionally independent in $y_l$, the equivalent restriction on $\Phi_{y_l}^{-1}$ is
\begin{align}\label{eq:orginRsctAR}
    \left[\Phi_{y_l}^{-1} \right]_{kh} = 0,\quad \text{for } (k,h) \notin E^{y_l}.
\end{align}
Note that the self loop from $y_{l(k)}$ to itself is seen as an existing edge in its internal topology, i.e., $(k,k) \in E^{y_l}$ for $k=1, \cdots, l$.

Define
\begin{align}\label{eq:F(z)}
    &P(z^{-1}) := \sum_{j=0}^{n} P_j z^{-j},\\
    \label{eq:mathcalF}
    & \mathcal{P}:=\begin{bmatrix}
                P_0 & P_1 & \cdots & P_n
              \end{bmatrix}.
\end{align}
Since $y_l$, $w$ are full-rank processes, $P(z^{-1})$ is invertible. From $y_l(t)= P(z^{-1})^{-1}w(t)$, we easily have 
\begin{align}\label{eq:Phi_yl-1}
    \Phi_{y_l}^{-1} = P(z^{-1})^{*}P(z^{-1}).
\end{align}

Then from direct calculations using \eqref{eq:F(z)}\eqref{eq:mathcalF}\eqref{eq:Phi_yl-1}, and based on Kolmogorov's formula, dual problem \eqref{eq:dualMaxiEntropy} with topology restriction \eqref{eq:orginRsctAR} reduces to
\begin{subequations}\label{eq:dualARgraph}
\begin{align}\label{eq:prob_dualARgraph}
    \min\limits_{\mathcal{P}}~ &-\log \det P_0'P_0  + \left< T(\hat{\mathcal{R}}^{y_l}), \mathcal{P}'\mathcal{P} \right> ,
\end{align} 
subject to
\begin{align}\label{eq:cons_dualARgraph}
\begin{split}
   \left[D_j (\mathcal{P}'\mathcal{P}) \right]_{kh}=0, &~~ \text{for } (k,h) \notin E^{y_l},  \\
       &\text{and for } j=0,1, \cdots, n,
\end{split}
\end{align}
\end{subequations}
where $\hat{\mathcal{R}}^{y_l}$ is given by \eqref{eq:R_yl}. 

Due to the symmetry of spectral density matrices, the topology constraints \eqref{eq:cons_dualARgraph} can be further reduced to
\begin{align}\label{eq:cons2_dualARgraph}
\begin{split}
 \left[D_j (\mathcal{P}'\mathcal{P}) \right]_{kh}=0, ~~ &\text{for } (k,h) \notin E^{y_l} \text{ and } k\geq h,  \\
      & \text{and for } j=0,1, \cdots, n.
\end{split}
\end{align}
 
The dual problem~\eqref{eq:dualARgraph} and the convex problem \eqref{eq:maxiEntropy} under constraint \eqref{eq:orginRsctAR} have unique solutions \cite{Lin13}.  
For a further simplification, solution and the properties of the convex problem \eqref{eq:dualARgraph}, we recommend \cite{Dahlhaus00}, \cite[pp. 98-109]{GraphicalBook10}, \cite{ConvexBook04} for readers as references.
This paper will not dwell on the topology estimation of AR graphical models, plentiful references can be given such as  \cite{Zorzi16,Lin13,topo10}.

\subsection{Estimation of the low rank latent variable}
Traditionally, given the estimates $\hat{P}_j$ for $j=0,1,\cdots,n$, $y_l(t)$ can be iteratively estimated (like in \cite{GraphicalBook10}) by
\begin{align}\label{eq:ylhat_old}
    \hat{y}_l(t) = - \hat{P}_0^{-1}\sum_{j=1}^{n} \hat{P}_j \hat{y}_l(t-j).
\end{align} 
However, if the covariance norm of the low rank latent variable $y_l$ is close to 1, that is, close to the covariance norm of innovation process $w(t)$, a non-negligible estimation error in each step will lead to a substantial accumulated error over time, resulting in an inaccurate estimation or prediction.

Therefore, though graphical estimation based on maximum entropy covariance extension usually gives good parameter estimates, a refined estimating approach of $y_l(t)$ instead of \eqref{eq:ylhat_old} is necessary for the sequential ARMAX estimation.
In this paper, we give a simple but efficient way of adjusting the estimation of $y_l(t)$ iteratively by
\begin{subequations}\label{eq:ylhat_new}
\begin{align}\label{eq:ylhat_new=}
  \hat{y}_l(t) = -\hat{P}_0^{-1}\sum_{j=1}^{n} \hat{P}_j \hat{y}_l(t-j) + \hat{w}_{c}(t) , ~~\text{for~} t> n,
\end{align}
with initial estimated values
\begin{align}
 \hat{y}_l(t)= \zeta_l(t), ~~\text{for~} t=1,\cdots, n,
\end{align}
where $\hat{w}_c(t)$ is a compensation variable estimating the process $P_0^{-1}w(t)$ at time $t$,
\begin{align}\label{eq:w_c}
  \hat{w}_c(t)= \frac{1}{1+\sigma^{2l}\det \hat{P}_0'\hat{P}_0}\left(\zeta_l(t) + \hat{P}_0^{-1}\sum_{j=1}^{n} \hat{P}_j \hat{y}_l(t-j)\right).
\end{align}
\end{subequations}
The accuracy of \eqref{eq:ylhat_new} will be verified in Section~\ref{sec:example}, and might be taken a glance from the following analysis.

Given the real parameters and real data, 
\begin{align}\nonumber
  \hat{w}_c(t)=\frac{\det \left( {P}_0^{-1}(P_0')^{-1}\right) }{\sigma^{2l}+\det\left( {P}_0^{-1}(P_0')^{-1}\right)} \left( e(t) -P_0^{-1}w(t) \right),
\end{align}   
can be seen as taking a part from the sum of processes $e(t)$ and $-P_0^{-1}w(t)$ by the ratio of the fractional coefficient in the equation,
where we have $\det \E \{e(t)e(t)' \}=\sigma^{2l}$, $\det \E \{{P}_0^{-1}w(t)w(t)'({P}_0^{-1})' \}= \det \left( {P}_0^{-1}(P_0')^{-1}\right)$. 
In addition, it is easily checked $\E\{\hat{w}_c(t)\}=0$ and $\E\{\hat{y}_l(t)-y_l(t)\}=0$ given the real parameters, with $\hat{y}_l$ given in \eqref{eq:ylhat_new}.

\section{Low rank graphical ARMAX estimation}\label{sec:ARMAX}
Based on \eqref{eq:H=A-1B}, denote the parameter to estimate by 
\begin{subequations}\label{eq:AcalBcalAzBz}
\begin{align}\label{eq:Acal}
    \mathcal{A} :=& \begin{bmatrix}
               A_1 & A_2 & \cdots & A_q
              \end{bmatrix}_{m\times mq},\\\label{eq:Bcal}
    \mathcal{B} := & \begin{bmatrix}
                    B_0 & B_1 &  \cdots & B_r
             \end{bmatrix}_{m\times l(r+1)},
\end{align}
with
\begin{align}\label{eq:AzBz}
    A(z^{-1})= \sum_{k=0}^{q}A_k z^{-k}, \quad B(z^{-1})= \sum_{k=0}^{r}B_k z^{-k},
\end{align}
\end{subequations}
where $A_0=I$ from our assumption.

If we are given the data of $\zeta_m(t)$, ${y}_l(t)$ for $t=1,2,\cdots, N$, we have the model to identify in \eqref{eq:ARMAX}, i.e., 
\begin{align}\nonumber
    A(z^{-1})\zeta_m(t) = B(z^{-1}){y}_l(t) +  A(z^{-1})e_m(t).
\end{align} 
This leads to an ARMAX estimation problem with a specific restriction that process $\zeta_m$ and the noise process share a common parameter matrix. 
To the best of our knowledge, discussion on the identifiability of such a special ARMAX model is still lacking.
Therefore, before solving the problem, we shall firstly explore the identifiability of model \eqref{eq:ARMAX}.

\subsection{Identifiability of the specific ARMAX estimation}\label{subsec:ARMAX_identifiability}
Similar to but different from the identifiability of ARMAX models with one-step delay given by Theorem~\ref{thm:chen93} in Appendix~\ref{apdx:thms} (see also \cite[Theorem 2.1]{Chen93}) and its generalized counterpart Corollary~\ref{coro:chen93} we extend in Appendix~\ref{apdx:thms}, the following result for our ARMAX model \eqref{eq:ARMAX} can be given. 

\begin{thm}[Identifiability of low rank ARMAX graphical model]\label{thm:identifiability}
The system \eqref{eq:ARMAX} is identifiable, if and only if $A, B$ are left coprime and $\rank(\begin{bmatrix} A_q & B_r \end{bmatrix})=m$.
\end{thm}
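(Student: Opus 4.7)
The plan is to recognise that \eqref{eq:ARMAX} is a standard ARMAX model
\[
A(z^{-1})\zeta_m(t) = B(z^{-1})y_l(t) + C(z^{-1})e_m(t),
\]
subject to the structural identity $C(z^{-1}) \equiv A(z^{-1})$, and then apply Corollary~\ref{coro:chen93} (the extension of Theorem~\ref{thm:chen93} to arbitrary polynomial degrees) to translate the two required conditions on the triple $(A,B,C)$ into the stated conditions on the pair $(A,B)$. Joint left coprimeness of $(A,B,A)$ collapses to left coprimeness of $(A,B)$, since a polynomial matrix $L$ is a common left divisor of $\{A,B,A\}$ if and only if it is a common left divisor of $\{A,B\}$. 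The joint leading-coefficient condition $\rank[A_q~B_r~C_q]=m$ likewise reduces to $\rank[A_q~B_r]=m$ by elementary column operations that eliminate the duplicated $A_q$ block.

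For the ``if'' direction I would assume the two stated conditions hold, observe that the triple $(A,B,A)$ therefore satisfies the hypotheses of Corollary~\ref{coro:chen93} in the \emph{unconstrained} ARMAX parameter space, and conclude identifiability there. Restricting to the subfamily $\{\tilde C = \tilde A\}$ then inherits identifiability immediately, since injectivity of the parameter-to-distribution map on a superset implies injectivity on any subset.

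For the ``only if'' direction I would argue by contrapositive, exhibiting in each failure case an alternative pair $(\tilde A,\tilde B)\neq (A,B)$ that still fits \eqref{eq:ARMAX} with $\tilde C = \tilde A$ and yields the same joint second-order statistics of $(\zeta_m,y_l)$. If $A=L\bar A$ and $B=L\bar B$ with a non-unimodular polynomial matrix $L$, then left-multiplying \eqref{eq:ARMAX} by $L^{-1}$ yields $\bar A\zeta_m = \bar B y_l + \bar A e_m$; rescaling by the constant $\bar A_0^{-1}$ to restore the monic normalisation produces a distinct admissible representation that preserves the structural constraint. If instead $\rank[A_q~B_r]<m$, a nonzero row vector $v$ with $vA_q=0$ and $vB_r=0$ enables a polynomial row-reduction on $[A(z^{-1})~B(z^{-1})]$ that lowers the effective degree in the corresponding row, again producing an inequivalent admissible representation.

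The hardest step, I expect, is the book-keeping in the coprimeness half of the necessity argument: dividing by $L$ disturbs the monic constraint $A_0=I$, and one must verify that after the constant renormalisation the rescaled polynomials still satisfy the structural identity $\tilde C = \tilde A$, not merely the same transfer $A^{-1}B$. Once this normalisation is settled, the rank-condition case and the final appeal to Corollary~\ref{coro:chen93} proceed without further difficulty.
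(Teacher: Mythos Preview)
Your proposal is correct, and the sufficiency half takes a genuinely different route from the paper. The paper proves both directions from scratch: for sufficiency it assumes a competing pair $(\tilde A,\tilde B)$ with $\tilde A^{-1}\tilde B=A^{-1}B$, forms $D=\tilde A^{-1}A$ and $\tilde D=A^{-1}\tilde A$, and uses coprimeness together with a Bezout identity to force $\tilde D$ to be a polynomial whose highest coefficient annihilates $[A_q~B_r]$, contradicting the rank hypothesis. You instead observe that the triple $(A,B,A)$ already satisfies the hypotheses of Corollary~\ref{coro:chen93}---joint left coprimeness of $\{A,B,A\}$ collapses to that of $\{A,B\}$, and $\rank[A_q~B_r~A_q]=\rank[A_q~B_r]$---so unconstrained ARMAX identifiability holds, and restricting to the sub-family $\tilde C=\tilde A$ (where $\tilde A^{-1}\tilde C=I$ automatically) inherits it. This is shorter and, incidentally, shows that the paper's remark that the model ``does not satisfy the sufficient and necessary conditions'' of Corollary~\ref{coro:chen93} is too pessimistic: the constraint $C=A$ does not spoil \emph{joint} coprimeness of the triple once $(A,B)$ are coprime.

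Your necessity direction is essentially the same contrapositive as the paper's. For the rank case the paper uses the left factor $I+Dz^{-1}$ with $D[A_q~B_r]=0$, which is the matrix version of your row-vector $v$ and sidesteps any degree bookkeeping. For the coprimeness case the monic renormalisation you flag is harmless: if $A=L\bar A$, $B=L\bar B$ with $L$ non-unimodular, then $L_0$ is invertible (since $A_0=I$), and $(\tilde A,\tilde B):=(L_0\bar A,\,L_0\bar B)$ is monic, of no greater degrees, satisfies $\tilde A^{-1}\tilde B=A^{-1}B$, and trivially preserves the structural constraint $\tilde C=\tilde A$.
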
 

\begin{proof}
Recall the definition of identifiability: for a certain pair of polynomials $(A(z^{-1}), B(z^{-1}))$ with respect to $z^{-1}$, there is no polynomials $\tilde{A}(z^{-1})=\sum_{k=0}^{\tilde{q}}\tilde{A}_k z^{-k}$, $\tilde{B}(z^{-1})=\sum_{k=0}^{\tilde{r}}\tilde{B}_k z^{-k}$, with $\tilde{q} \leq q$, $\tilde{r} \leq r$, respectively, such that $\tilde{A}(z^{-1})^{-1}\tilde{B}(z^{-1}) \equiv A(z^{-1})^{-1}B(z^{-1})$ unless $\tilde{A}(z^{-1})\equiv A(z^{-1})$,  $\tilde{B}(z^{-1})\equiv B(z^{-1})$.

$\Rightarrow$: When \eqref{eq:ARMAX} is identifiable, suppose $A, B$ have a same left factor $D(z^{-1})\neq I$ with respect to $z^{-1}$. Since $A$ is invertible, $D$ is invertible.
Then we have $\begin{bmatrix}\tilde{A} & \tilde{B}\end{bmatrix}:= D(z^{-1})\begin{bmatrix}A & B\end{bmatrix}$, where $\tilde{A}$, $\tilde{B}$ are polynomial matrices, satisfying $\tilde{A} \neq A$, $\tilde{B} \neq B$, and the degrees of $\tilde{A}$ and $\tilde{B}$ are not greater than $A$, $B$, respectively. This contradiction proves that the left coprimeness is a necessary condition of identifiability.

Suppose $\rank(\begin{bmatrix} A_q & B_r \end{bmatrix}) < m$. Then there exists a (non-unique) non-zero $m\times m$ constant matrix $D$, satisfying $D\begin{bmatrix} A_q & B_r \end{bmatrix}=0$. Denote by
\begin{subequations}\nonumber
\begin{align}
\begin{split}
    \tilde{A}(z^{-1}):= & (I+Dz^{-1})A(z^{-1})= A_0 + (A_1 + DA_0)z^{-1} \\
   & + \cdots + (A_q + DA_{q-1})z^{-q},
\end{split}
\end{align}
\begin{align}
\begin{split}
    \tilde{B}(z^{-1}):= & (I+Dz^{-1})B(z^{-1})= B_0 + (B_1 + DB_0)z^{-1}\\
    & + \cdots + (B_r + DB_{r-1})z^{-r}.
\end{split}
\end{align}
\end{subequations}
We have 
$$\tilde{A}(z^{-1})^{-1}\tilde{B}(z^{-1}) \equiv A(z^{-1})^{-1}B(z^{-1}).$$
However, from the definition of identifiability, the above equation can only lead to $\tilde{A}\equiv A, \tilde{B} \equiv B$, and $D=0$. This contradiction shows that $\rank(\begin{bmatrix} A_q & B_r \end{bmatrix})=m$.

$\Leftarrow$: If $A, B$ are left coprime and $\rank(\begin{bmatrix} A_q & B_r \end{bmatrix})=m$.
Suppose there exists a pair of polynomials $\begin{bmatrix}\tilde{A} & \tilde{B}\end{bmatrix} \neq \begin{bmatrix}A & B\end{bmatrix}$, with degrees $\tilde{q}\leq q$, $\tilde{r}\leq r$, respectively, satisfying 
$$
\tilde{A}(z^{-1})^{-1}\tilde{B}(z^{-1}) \equiv A(z^{-1})^{-1}B(z^{-1}).
$$ 

Denote by $D(z^{-1}):= \tilde{A}^{-1}A$, from the above equation, we have $\begin{bmatrix}A & B\end{bmatrix} = D(z^{-1})\begin{bmatrix}\tilde{A} & \tilde{B}\end{bmatrix}$. Since $\tilde{A} \neq A$ with a degree no greater than $A$, $D(z^{-1})$ is a non-identity polynomial matrix with respect to $z^{-1}$, leading that $A, B$ have a same left factor $D(z^{-1})$. This contradicts the coprimeness. 

Denote by $\tilde{D}(z^{-1}):= A^{-1}\tilde{A}$, then we have $\begin{bmatrix}\tilde{A} & \tilde{B}\end{bmatrix} = \tilde{D}(z^{-1})\begin{bmatrix}A & B\end{bmatrix}$ and $\tilde{D}$ non-identity. Since $A, B$ are left coprime, there exist two polynomial matrices $M_1(z^{-1}), M_2(z^{-1})$, such that 
$$
    A(z^{-1})M_1(z^{-1}) + B(z^{-1})M_2(z^{-1})=I,
$$
and hence $\tilde{A}(z^{-1})M_1(z^{-1}) + \tilde{B}(z^{-1})M_2(z^{-1})=\tilde{D}(z^{-1})$, implying that $\tilde{D}(z^{-1})$ is a polynomial matrix. Set $\tilde{D}(z^{-1})=\sum_{k=0}^{n_d}D_k z^{-k}$ (with $D_{n_d}\neq 0$), we have 
\begin{subequations}\nonumber
\begin{align}
\begin{split}
   & \begin{bmatrix}
      \tilde{A}(z^{-1}) & \tilde{B}(z^{-1})
    \end{bmatrix} = \begin{bmatrix} D_0A_0 & D_0B_0 \end{bmatrix} \\ 
    &+ \begin{bmatrix} D_0A_1 + D_1A_0 &  D_0B_1 + D_1B_0 \end{bmatrix}z^{-1} 
                      + \cdots \\
                    & + \begin{bmatrix} D_{n_d}A_\alpha &  D_{n_d}B_\alpha \end{bmatrix}z^{-\alpha},
\end{split}
\end{align}
\end{subequations}
with $\alpha=\max\{q,r\}$.  
Note that $A_k=0$ for $k>q$ and $B_j=0$ for $j>r$. 
If $n_d=0$, $A, B$ will have a common factor. Hence $n_d \geq 1$. 
Since $\tilde{q}\leq q$, $\tilde{r}\leq r$, we must have $D_{n_d} A_q=0$, $D_{n_d} B_r=0$, i.e., $D_{n_d} \begin{bmatrix} A_q & B_r \end{bmatrix}=0$ with $D_{n_d}\neq 0$. The two contradictions prove the sufficiency. 
\end{proof}

Our low rank ARMAX model \eqref{eq:ARMAX} does not satisfy the sufficient and necessary conditions of general ARMAX identifiability in Corollary~\ref{coro:chen93}, since the parameter matrices corresponding to $\zeta_m$ and $e_m$ are the same (i.e., not left-coprime in general). 
However, the sufficient and necessary conditions can still be given for the low rank graphical ARMAX identifiability in Theorem~\ref{thm:identifiability}. 
Also note that, the above theorem holds when $\zeta(t)=0, \hat{y}_l(t)=0, e_m(t)=0$ for $t\leq 0$, which is an implicit precondition of this paper.

Since we have assumed the left coprimeness between polynomial matrices $A$ and $B$, system \eqref{eq:ARMAX} is identifiable iff  $\rank(\begin{bmatrix} A_q & B_r \end{bmatrix})=m$.

\subsection{Maximum likelihood estimation}\label{subsec:ARMAX_ML}
Next we shall use maximum likelihood estimation to estimate the low rank graphical ARMAX model \eqref{eq:ARMAX}. Denote by 
\begin{subequations}\label{eq:x(t)dist}
\begin{align}
x(t):= A(z^{-1})e_m(t) = A(z^{-1})\zeta_m(t) - B(z^{-1})y_l(t),
\end{align}
for $t=1,2,\cdots, N$. By assuming $e(t)\sim N(0,\sigma^2I)$, we have  
\begin{align}
&x(t) \sim N(0,\Gamma),\\ \label{eq:Gamma}
&\Gamma:= \sigma^2\sum_{k=0}^{q}A_kA_k'= \sigma^2(I + \Acal \Acal'),
\end{align} 
\end{subequations}
with $\Acal$ given in \eqref{eq:Acal}. Hence we have $\Gamma \succ 0$.

From \eqref{eq:Hpartition} and \eqref{eq:zeta_(i)}, the $j$-th row of $H(z)$ can be written as 
\begin{align}\label{eq:H(i)A(i)B(i)}
  H_j(z)=A_{(j)}^{-1}(z^{-1})B_{(j)}(z^{-1}),
\end{align}
where $A_{(j)}(z^{-1})$ and $B_{(j)}(z^{-1})$ are left coprime, with $A_{(j)}$ a scalar monic polynomial and $B_{(j)}$ a $1\times l$ polynomial vector. 
Hence without loss of generality, we can formulate $A(z^{-1})$ and $B(z^{-1})$ as
\begin{subequations}
\begin{align} \label{eq:Adiag}
A(z^{-1})&=\diag\left\{ A_{(1)}(z^{-1}),~ \cdots,~ A_{(m)}(z^{-1}) \right\},\\
B(z^{-1})&=\begin{bmatrix}
             B_{(1)}(z^{-1})' & \cdots & B_{(m)}(z^{-1})'
           \end{bmatrix}'.
\end{align}
\end{subequations}
Then by direct calculations, \eqref{eq:Adiag} leads to a constraint in parameter estimation
\begin{align}\label{eq:Acons}
    A_j \in \Db_m, \text{~~for~} j=1,\cdots, q.
\end{align}

Denote by 
\begin{align}\label{eq:ThetaAcalBcal}
    \Theta:=[\Acal, \Bcal], 
\end{align}
with $\Acal, \Bcal$ given in \eqref{eq:Acal}\eqref{eq:Bcal}. From \eqref{eq:x(t)dist}, the conditional probability density function of each $x(t)$ is 
\begin{align}\label{eq:likelihood}
    p(x(t) | \Theta) = \frac{1}{(2\pi)^{m/2} \left(\det \Gamma\right)^{1/2}} e^{- \frac{1}{2} x(t)'\Gamma^{-1}x(t)}, 
\end{align}
for $t=1, \cdots, N$. 
The log-likelihood function can be obtained as
\begin{align}
\begin{split}
    \log p(x(1), \cdots, x(N)| \Theta) = &-\frac{mN}{2} \log(2\pi) \\
    - \frac{N}{2}\log(\det \Gamma) 
   & - \frac{1}{2} \sum_{t=1}^{N} x(t)'\Gamma^{-1}x(t),
\end{split}
\end{align}
leading to an optimization problem of our maximum likelihood estimation procedure 
\begin{subequations}\label{eq:MLOriginalProblem}
\begin{align}
  \hat{\Theta}= \arg \min_{{\Theta}}~J_N({\Theta}),
\end{align}
subject to \eqref{eq:Acons},
where 
\begin{align}\label{eq:J(A,B)}
  J_N({\Theta}) = \log(\det \Gamma(\Theta))+ \frac{1}{N}\sum_{t=1}^{N} x(t,\Theta)'\Gamma^{-1}(\Theta)x(t,\Theta),
\end{align} 
\end{subequations} 
$x(t)$, $\Gamma$ are given in \eqref{eq:x(t)dist}.

In the above optimization problem we only consider a constraint \eqref{eq:Acons} for parameter $\Theta$. 
In specific problems or systems, some of the entries in $H(z)$ might be zero, corresponding to the non-existence of some edges in the specific graph of $H(z)$ corresponding to an edge set $E^{H(z)}$.
If $[H(z)]_{kh}$ is given or estimated zero for $(k,h) \notin E^{H(z)}$, the following constraint can be added to the optimization problem, 
\begin{align}\label{eq:Bcons}
\begin{split}
    [B_j]_{kh}=0, ~~&\text{for~} j=0,1,\cdots, r, \\
                & ~~\text{for~} (k,h)\notin E^{H(z)},
\end{split}
\end{align}
without changing the convexity of the feasible domain. 
And hence the corresponding constrained problem is obviously feasible.

The absence of the edges in a low rank ARMAX graphical model (i.e., the zero entries in $H(z)$) could be estimated by designing evaluation indices like in \cite{Lin13,Zorzi16}.
However, because the main aim of the paper is to introduce the low rank graphical model and to propose one integrated estimation algorithm, as well as limited to the space, 
we shall not discuss this derivational problem and shall leave it for our future work.

Note that in some papers like \cite{Hannan76, Hannan80} on identifying general ARMA or ARMAX models by maximum likelihood, instead of the covariance matrix $\Gamma$ used in our problem \eqref{eq:MLOriginalProblem}, an approximate covariance matrix based on $x(1), \cdots , x(N)$ with $N$ finite might be considered in some cases. 
We use $\Gamma$, since the likelihood function \eqref{eq:likelihood} can be accurately given directly based on the Gaussian assumption in this paper, with respected to the parameters in polynomial function matrices $A(z^{-1})$ and $B(z^{-1})$. And under a smaller $N$, using $\Gamma$ would have a better performance \cite{Hannan76}.

Denote by 
\begin{subequations}\label{ZmYl}
\begin{align}
  z(t) :=  \begin{bmatrix}
             Z_m(t)' & -Y_l(t)' 
           \end{bmatrix}',
\end{align}
with 
\begin{align}
    & Z_m(t)=  \begin{bmatrix}
                 \zeta_m(t-1)' & \cdots & \zeta_m(t-q)' \end{bmatrix}',\\
    & {Y}_l(t) = \begin{bmatrix}
        y_l(t)' & \cdots & y_l(t-r)' \end{bmatrix}'.
\end{align}
\end{subequations}
Then we have $x(t)= \zeta_m(t) + \Theta z(t)$ and the following results.

\begin{prop}[Jacobin and Hessian]\label{prop:Jacobin_Hessian}
The {\bf Jacobin matrix} of $J(\Theta)$ in \eqref{eq:J(A,B)} w.r.t. $\Theta$ is
\begin{align}\label{eq:Jacobin}
  \frac{\pdf J}{\pdf \Theta} = 2D\Theta'\Gamma^{-1} + \frac{2}{N}\sum_{t=1}^{N} (z(t)-D\Theta'\Gamma^{-1}x(t))x(t)'\Gamma^{-1},
\end{align} 
where
\begin{align}\label{eq:D}
  D= \diag \{ I_{mq},~0_{l(r+1)} \}.
\end{align}
Hence the {\bf gradient matrix} of $J(\Theta)$ is 
\begin{align}
    \nabla_\Theta {J} = \left( \frac{\pdf J}{\pdf \Theta} \right)'.
\end{align}
The {\bf Hessian matrix} of $J(\Theta)$ is 
\begin{subequations}\label{eq:HessianJAB}
\begin{align}
  \Hb(J) &= 2 \Hb_1  + \frac{2}{N}\sum_{t=1}^{N}\left( \Hb_2(t) + \Hb_3(t)\right),
\end{align}
where 
\begin{align} 
 \Hb_1 = -M_1D\otimes \Gamma^{-1}-K(\Gamma^{-1}\Theta D \otimes D\Theta'\Gamma^{-1} ), 
\end{align}
\begin{align}
\begin{split}
   \Hb_2(t) = K \Big(\Gamma^{-1}\Theta D\otimes & M_2(t)x(t)'\Gamma^{-1} \\
   & + \Gamma^{-1}x(t)M_2(t)'\otimes D\Theta'\Gamma^{-1}  \Big), 
\end{split}
\end{align}
\begin{align}
    \Hb_3(t) = \ M_2(t)M_2(t)' \otimes  \Gamma^{-1} + M_1D \otimes \Gamma^{-1}x(t)x(t)'\Gamma^{-1},
\end{align}
\end{subequations}
with
\begin{subequations}
\begin{align}\label{eq:M_1}
    & M_1 = D\Theta'\Gamma^{-1}\Theta-I, \\ 
   \label{eq:M_2}
    &M_2(t)=D\Theta'\Gamma^{-1} x(t) -z(t), \\
    &K:= K_{mq+rl+l,m} = \sum_{j=1}^{m}\left(\iota_j' \otimes I_{mq+rl+l} \otimes \iota_j \right),
\end{align}
$\iota_j$ denotes an $m$-dimensional column vector with the $j$-th entry equal to $1$ and other entries $0$. 
\end{subequations}
\end{prop}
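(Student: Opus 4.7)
The plan is to carry out standard matrix differential calculus, reading the Jacobian off $dJ=\trace(A\,d\Theta)$ with $A=\partial J/\partial\Theta$ in the paper's layout, and the Hessian off $d\vecc(\nabla_\Theta J)=\Hb(J)\,d\vecc(\Theta)$. The preliminary step is to compactify the $\Theta$-dependence as $\Gamma=\sigma^{2}(I+\Theta D\Theta')$ (using $\Theta D\Theta'=\Acal\Acal'$ by definition of $D$) and $x(t)=\zeta_m(t)+\Theta z(t)$, so that the two channels through which $\Theta$ enters $J$---via $\Gamma$ and via $x(t)$---are made explicit. The only tools needed are the identities $d\log\det\Gamma=\trace(\Gamma^{-1}d\Gamma)$, $d\Gamma^{-1}=-\Gamma^{-1}(d\Gamma)\Gamma^{-1}$, the transpose trick $\trace(Md\Theta')=\trace(M'd\Theta)$, the vectorization identity $\vecc(ABC)=(C'\otimes A)\vecc(B)$, and the commutation-matrix relations $\vecc(M')=K\vecc(M)$ together with $K(A\otimes B)=(B\otimes A)K$ in the appropriate layout.

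For the Jacobian, $d\log\det\Gamma$ uses $d\Gamma=\sigma^{2}(d\Theta\,D\Theta'+\Theta D\,d\Theta')$; symmetry of $\Gamma^{-1}$ collapses the two summands into $2\,\trace(D\Theta'\Gamma^{-1}d\Theta)$, yielding the first contribution $2D\Theta'\Gamma^{-1}$. For $x(t)'\Gamma^{-1}x(t)$, the variation splits into a piece from $dx(t)=d\Theta\,z(t)$ and a piece from $d\Gamma^{-1}$, giving $2\,\trace(z(t)x(t)'\Gamma^{-1}d\Theta)$ and $-2\,\trace(D\Theta'\Gamma^{-1}x(t)x(t)'\Gamma^{-1}d\Theta)$ respectively. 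Averaging over $t$ and grouping by the common right factor $x(t)'\Gamma^{-1}$ combines them into $\frac{2}{N}\sum_t(z(t)-D\Theta'\Gamma^{-1}x(t))x(t)'\Gamma^{-1}$, exactly reproducing \eqref{eq:Jacobin}; transposing gives $\nabla_\Theta J$.

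For the Hessian, I would differentiate $\nabla_\Theta J$ once more and vectorize, sorting the second-order contributions into three groups. $\Hb_1$ collects the sample-independent terms from further differentiating the log-det Jacobian: the variation of the explicit $\Theta'$, after $\vecc(d\Theta')=K'\vecc(d\Theta)$ and the commutation identity $K(A\otimes B)=(B\otimes A)K$, produces $-K(\Gamma^{-1}\Theta D\otimes D\Theta'\Gamma^{-1})$, while the $d\Gamma^{-1}$ variation, combined with the analogous $\Gamma^{-1}$-variations from the quadratic-form Jacobian, yields the compact block $-M_1 D\otimes\Gamma^{-1}$ with $M_1$ as in \eqref{eq:M_1} through the identity $D-D\Theta'\Gamma^{-1}\Theta D=-M_1 D$. $\Hb_3(t)$ collects the ``diagonal'' data-dependent second-order pieces in which each differentiation acts on an outer vector of the bilinear form; recognizing $M_2(t)$ in \eqref{eq:M_2} groups them as $M_2(t)M_2(t)'\otimes\Gamma^{-1}+M_1 D\otimes\Gamma^{-1}x(t)x(t)'\Gamma^{-1}$. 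Finally $\Hb_2(t)$ collects the mixed cross-terms in which one differential hits $\Theta$ and the other $\Theta'$ at a different position inside the same product; these carry $K$ in front and yield the displayed form.

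The main obstacle is the bookkeeping of Kronecker layouts rather than any single identity. Every term in \eqref{eq:Jacobin} depends on $\Theta$ through several channels (explicit $\Theta$ or $\Theta'$, $\Gamma^{-1}(\Theta)$, and $x(t,\Theta)$), so each produces multiple second-order contributions; the transpositions $\Theta\leftrightarrow\Theta'$ must be tracked by inserting $K$ in the correct slot via $K(A\otimes B)=(B\otimes A)K$, and the two independent sources of $d\Gamma^{-1}$---one from the log-det term and one from the quadratic form---must be combined consistently to extract the compact $M_1$. Once all contributions are sorted by whether the two derivatives act on the same factor, on transposed factors, or on distinct factors of the same product, regrouping through $M_1$ and $M_2(t)$ produces exactly the three-term decomposition $2\Hb_1+\frac{2}{N}\sum_t(\Hb_2(t)+\Hb_3(t))$ stated in the proposition.
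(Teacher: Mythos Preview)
Your Jacobian derivation is essentially identical to the paper's: both write $J=J_1+J_2$ with $J_1=\log\det\Gamma$ and $J_2=\frac{1}{N}\sum_t x(t)'\Gamma^{-1}x(t)$, compute $\df J_1=2\trace(D\Theta'\Gamma^{-1}\df\Theta)$ and $\df J_2=\frac{2}{N}\sum_t\trace\big((z(t)-D\Theta'\Gamma^{-1}x(t))x(t)'\Gamma^{-1}\df\Theta\big)$, and read off \eqref{eq:Jacobin} from $\df J=\trace(A\,\df\Theta)$.

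For the Hessian you take a genuinely different route. The paper does \emph{not} differentiate $\nabla_\Theta J$ and vectorize; instead it computes the second differential $\dd J$ directly, massages each piece into one of the two canonical trace forms $\trace(\Vb\,\df\Theta\,\Ub\,\df\Theta')$ or $\trace(\Bb\,\df\Theta\,\Cb\,\df\Theta)$, and then invokes the Magnus--Neudecker identification (Lemma~\ref{lem:JacobinHessian}(b) in the appendix) to read off the Kronecker blocks and the commutation factor $K$ automatically. In that organization the split is clean by construction: $\dd J_1$ alone produces $2\Hb_1$ (the $-M_1D\otimes\Gamma^{-1}$ block and the $K(\cdot)$ block both arise from the log-det term with no input from $J_2$), while $\dd J_2$ alone produces $\frac{2}{N}\sum_t(\Hb_2(t)+\Hb_3(t))$. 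Your statement that $-M_1D\otimes\Gamma^{-1}$ requires ``combining with the analogous $\Gamma^{-1}$-variations from the quadratic-form Jacobian'' is therefore not how the paper arrives at it; in the second-differential approach the $-I$ in $M_1$ comes simply from the $D\,\df\Theta'\,\Gamma^{-1}\df\Theta$ piece of $\dd J_1$. Your route is correct and will land on the same formulas, but it demands more manual bookkeeping of $K$ via $\vecc(M')=K\vecc(M)$ and $K(A\otimes B)=(B\otimes A)K$, whereas the paper's use of the identification lemma packages that once and for all.
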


\begin{proof}
From direct calculations, the differential of $J(\Theta)$ in \eqref{eq:J(A,B)} is
\begin{subequations}\label{eq:dJ}
\begin{align}
  \df J = \df J_1 + \df J_2,
\end{align}
where
\begin{align}\label{eq:dJ1}
    \df J_1 =& 2 \trace\left( D\Theta'\Gamma^{-1}\df \Theta\right), \\ \label{eq:dJ2}
    \df J_2 =& \frac{2}{N} \sum_{t=1}^{N} \trace \left( (z(t)-D\Theta'\Gamma^{-1}x(t))x(t)'\Gamma^{-1}\df \Theta \right),
\end{align}
\end{subequations}
$D$ is given in \eqref{eq:D}, with $\df J_1$, $\df J_2$ respectively corresponds to the differential of the two terms on the RHS of \eqref{eq:J(A,B)}. 
Hence from Lemma~\ref{lem:JacobinHessian}~(a) in Appendix~\ref{apdx:thms}, the Jacobin matrix \eqref{eq:Jacobin} is obtained.

Then we calculate the second-order differential of $J$. For $\df J_1$ in \eqref{eq:dJ1} we have
\begin{subequations}\nonumber
\begin{align}
    \dd J_1 := 2 \trace\left( d_1 \right),
\end{align}
\begin{align}
    d_1 &= \df \left(D\Theta'\Gamma^{-1}\df\Theta\right)
    = D(\df\Theta')\Gamma^{-1}\df\Theta - D\Theta'\Gamma^{-1}(\df\Gamma)\Gamma^{-1}\df\Theta \\
    &=D(I- \Theta'\Gamma^{-1}\Theta D)\df\Theta'\Gamma^{-1}\df\Theta  - D\Theta'\Gamma^{-1}\df\Theta D\Theta'\Gamma^{-1}\df\Theta\\
    &=-M_1D \df\Theta'\Gamma^{-1}\df\Theta  - D\Theta'\Gamma^{-1}\df\Theta D\Theta'\Gamma^{-1}\df\Theta.
\end{align}
\end{subequations}
For $\df J_2$ in \eqref{eq:dJ2} we have 
\begin{subequations}\nonumber
\begin{align}
\begin{split}
    \dd J_2 =& \frac{1}{N}\sum_{t=1}^{N}\trace\left(\dd( x(t)'\Gamma^{-1}x(t))\right) \\
    := &\frac{2}{N}\sum_{t=1}^{N}\left( \trace(d_2(t)) +  \trace(d_3(t))\right),
\end{split}
\end{align}
where
\begin{align}
    d_2(t) &=  2M_2(t) x(t)'\Gamma^{-1}\df \Theta D\Theta'\Gamma^{-1}\df \Theta,  \\
    d_3(t) &=  M_2(t)M_2(t)'\df\Theta'\Gamma^{-1}\df\Theta \\ 
    & \quad \quad \quad \quad +  M_1D\df\Theta'\Gamma^{-1}x(t)x(t)'\Gamma^{-1}\df\Theta,
\end{align}
\end{subequations}
with $M_1$, $M_2(t)$ given in \eqref{eq:M_1} \eqref{eq:M_2}.
Hence we have \eqref{eq:HessianJAB} based on Lemma~\ref{lem:JacobinHessian} in Appendix~\ref{apdx:thms} and the equation $\df(\trace(\cdot))=\trace(\df(\cdot))$.
\end{proof}

From Proposition~\ref{prop:Jacobin_Hessian}, the Hessian matrix of function $J(\Theta)$ is not a positive or negative definite matrix. This can be easily checked when processes $\zeta_m(t)$ and $y_l(t)$ are scalar. Hence problem \eqref{eq:MLOriginalProblem} is non-convex in general.
Furthermore, we give the following result on the stationary point of problem \eqref{eq:MLOriginalProblem}.

\begin{thm}\label{thm:equilibrium}
Point $\Theta_1$ satisfying \eqref{eq:Acons} is a stationary point of problem~\eqref{eq:MLOriginalProblem}, if and only if $\frac{\pdf J}{\pdf \Theta} |_{\Theta=\Theta_1} = 0$, i.e., 
\begin{align}\label{eq:equilibrium}
    N D\Theta_1' - D\Theta_1'\Gamma^{-1} \sum_{t=1}^{N}x(t)x(t)' + \sum_{t=1}^{N}z(t)x(t)'=0,
\end{align}
for $\Theta_1$. \eqref{eq:equilibrium} is also a necessary condition for $\Theta_1$ being a local minimum.
\end{thm}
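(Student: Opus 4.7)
The plan is to derive \eqref{eq:equilibrium} by directly manipulating the Jacobian formula \eqref{eq:Jacobin} from Proposition~\ref{prop:Jacobin_Hessian}. Setting $\frac{\pdf J}{\pdf \Theta}|_{\Theta=\Theta_1}=0$, I would right-multiply by $\Gamma(\Theta_1)$, which is invertible since $\Gamma=\sigma^2(I+\Acal\Acal')\succ 0$ by \eqref{eq:Gamma}, and then multiply by $N/2$. Expanding the sum $\sum_{t=1}^{N}(z(t)-D\Theta_1'\Gamma^{-1}x(t))x(t)'$ and pulling the constant factor $D\Theta_1'\Gamma^{-1}$ out of one of the two resulting terms yields $ND\Theta_1'+\sum_{t=1}^{N}z(t)x(t)'-D\Theta_1'\Gamma^{-1}\sum_{t=1}^{N}x(t)x(t)'=0$, which is exactly \eqref{eq:equilibrium} after a trivial rearrangement. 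This step is purely algebraic and uses nothing beyond the explicit Jacobian expression and invertibility of $\Gamma$.

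For the equivalence with stationarity, the feasible set carved out by \eqref{eq:Acons} together with the fixed $A_0=I$ is a closed linear subspace of the ambient parameter space, so $J$ restricted to it is smooth and standard first-order necessary conditions for constrained smooth optimisation apply. A point $\Theta_1$ is stationary on this subspace if and only if $\nabla_\Theta J$ vanishes along every feasible direction. The ``if'' direction of the theorem is immediate, since a vanishing full Jacobian annihilates every direction, in particular the feasible ones, so $\Theta_1$ is automatically stationary. For the ``only if'' direction I would introduce Lagrange multipliers for the linear constraints $[A_j]_{kh}=0$ with $k\neq h$ and show that the resulting KKT stationarity system rearranges into the matrix form \eqref{eq:equilibrium}; the necessary-condition clause for a local minimum then follows from the same first-order necessary conditions applied at the minimiser, since any local minimum of a differentiable function on a linear subspace must kill every feasible directional derivative.

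The main obstacle is precisely this ``only if'' direction: \eqref{eq:equilibrium} is a single matrix identity whose rows encode partials with respect to all entries of $\Theta$, including the constrained off-diagonal entries of $A_j$, so one must verify that absorbing the Lagrange multipliers produces exactly the matrix identity in the statement rather than a weaker projected version. A clean route is to pass to reduced free-parameter coordinates (the diagonals of $A_j$ together with the full $B_k$), recompute the gradient there using the same differential techniques underlying Proposition~\ref{prop:Jacobin_Hessian}, and then check by direct comparison that vanishing of the reduced gradient is equivalent to \eqref{eq:equilibrium} under \eqref{eq:Acons}. Noting that when the $A_j$ are diagonal the matrix $\Gamma=\sigma^2(I+\sum_k A_kA_k')$ is itself diagonal, so $\Gamma^{-1}$ is diagonal as well, helps to simplify the row-by-row matching and to see that the off-diagonal rows of \eqref{eq:equilibrium} are indeed compatible with the constraint.
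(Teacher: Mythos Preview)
The paper gives no separate proof of this theorem; it is treated as an immediate consequence of Proposition~\ref{prop:Jacobin_Hessian}. Under the reading the paper evidently intends---namely that \eqref{eq:equilibrium} is simply the matrix identity obtained by right-multiplying \eqref{eq:Jacobin} by $\tfrac{N}{2}\Gamma$ and setting the result to zero---your algebraic derivation is exactly right, and the ``if and only if'' is just the tautology $\tfrac{\pdf J}{\pdf\Theta}=0\Leftrightarrow\eqref{eq:equilibrium}$, with the clause ``satisfying \eqref{eq:Acons}'' serving only to restrict attention to feasible points. On that reading there is nothing more to prove, and your first paragraph already delivers the whole argument.

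Where your proposal diverges from the paper is in taking ``stationary point of problem~\eqref{eq:MLOriginalProblem}'' to mean a KKT point of the \emph{constrained} problem and then trying to show that constrained stationarity forces the full matrix equation \eqref{eq:equilibrium}, including its off-diagonal entries. You correctly flag this as the non-trivial direction, but your proposed verification is unlikely to close. At a feasible point $\Gamma$ is diagonal, as you note; however the $((j-1)m+h,k)$-entry of \eqref{eq:equilibrium} for $k\neq h$ reduces to
\[
\sum_{t}\zeta_{m,(h)}(t-j)\,x_{(k)}(t)\;-\;[A_j]_{hh}[\Gamma^{-1}]_{hh}\sum_{t}x_{(h)}(t)\,x_{(k)}(t)\;=\;0,
\]
which involves cross-row sample moments between rows $h$ and $k$. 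The reduced (free-parameter) gradient, being computed row by row once $A$ and $\Gamma$ are diagonal, imposes no relation on these cross terms for finite $N$, so vanishing of the reduced gradient does not in general imply the off-diagonal equalities above. In other words, the ``equivalence'' you plan to check would typically fail, and the Lagrange multipliers for the off-diagonal constraints need not vanish. The cleanest resolution is to adopt the paper's interpretation: the theorem records that $\tfrac{\pdf J}{\pdf\Theta}=0$ is algebraically the same as \eqref{eq:equilibrium}, and the necessity for a local minimum follows from first-order conditions in the unconstrained sense. If you wish to keep the constrained reading, the correct statement would replace \eqref{eq:equilibrium} by its projection onto the free coordinates.
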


\eqref{eq:equilibrium} in Theorem~\ref{thm:equilibrium} is a complex high order equation about parameter $\Theta$, and hence is difficult to theoretically analyze the extreme points of ARMAX low rank graphical estimation by maximum likelihood in \eqref{eq:MLOriginalProblem}, through calculating the analytic solution for stationary points.

Using optimization approaches like Newton's method with equality constraints \cite[Chapter 10]{ConvexBook04} based on Proposition~\ref{prop:Jacobin_Hessian} above and Proposition~\ref{prop:JH_cons} in Appendix~\ref{apdx:thms}, a suboptimal solution can be obtained for problem \eqref{eq:MLOriginalProblem} under constraints \eqref{eq:Acons}, \eqref{eq:Bcons}. To simply use Newton's method with linear equality constraints, $\vecc(\Theta)$ is used instead of matrix $\Theta$ in optimization.

Specifically for our problem, suppose Newton's method is chosen. 
For a vectorized $\vecc(\Theta)$, given an initial value $\vecc(\Theta_{\rm{in}})$ in the feasible domain satisfying all constraints and tolerance $\epsilon >0$.
The Newton step $\Delta \vecc(\Theta)$ is characterized by 
\begin{align}\label{eq:NetwonStep}
  \begin{bmatrix}
     \Hb(J) & C' \\
     C & 0
   \end{bmatrix}\begin{bmatrix}
                  \Delta \vecc(\Theta) \\
                  x
                \end{bmatrix}=\begin{bmatrix}
                                -\vecc(\nabla_{\Theta}J) \\
                                0
                              \end{bmatrix},
\end{align}
where we use the facts $\Hb(J)= \Hb_{\vecc(\Theta)}(J)$, $\nabla_{\vecc{\Theta}}J=\vecc(\nabla J)$ for any scalar function $J$,  $x$ is the associated optimal dual variable for the quadratic problem in Newton's method \cite[p.256]{ConvexBook04} , $\Hb(J), \nabla_{\Theta}J$ are given in Proposition~\ref{prop:Jacobin_Hessian}, and $C$ is given in Proposition~\ref{prop:JH_cons}.
The Newton decrement is given by 
\begin{align}\label{eq:NetwonDec}
    d(\vecc(\Theta))= \left( \Delta \vecc(\Theta)'\Hb(J(\Theta))\Delta \vecc(\Theta)  \right) ^{\frac{1}{2}}.
\end{align}

After obtaining an estimate $\hat{\Theta}$, the estimate of $\hat{y}_m(t)$ can be given by 
\begin{align}\label{eq:ymhat}
    \hat{y}_m(t)=\hat{H}(z)\hat{y}_l(t),
\end{align}
with $\hat{H}(z)$ calculated by \eqref{eq:H=A-1B}\eqref{eq:AcalBcalAzBz}\eqref{eq:ThetaAcalBcal}.

Now a realization procedure of our whole algorithm on estimating low rank graphical models can be given, which is shown in Algorithm~1 and abbreviated by LRG.  
The identification of the low rank latent variable $y_l$ for an AR graphical model and the estimation of $\hat{y}_l$ is realized in line $1$ to line $3$. The maximum likelihood estimation of the graphical ARMAX model is realized through a Newton's method with equality constraints in line $4$ to line $11$ in Algorithm~1. 
For backtracking line search in Algorithm~1, we recommend \cite{ConvexBook04} to readers for more details.
The optimization approaches to the two constrained optimization problems might be changed for a better performance in specific scenarios.


\begin{figure}
  \centering \hspace{-2.8mm}
\includegraphics[scale=0.54]{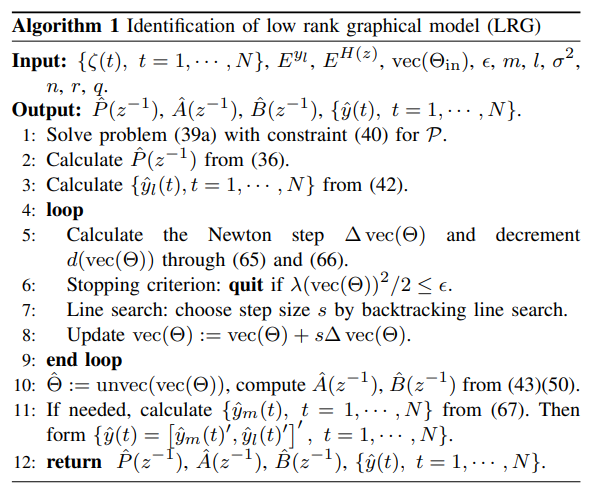}
\end{figure}

In this paper we suppose the dimension of the low rank latent variable $y_l(t)$, and the zero entries in $H(z)$ are known. 
The estimation of these information  will be discussed in our future work.
Next we shall theoretically analyze the convergence of the maximum likelihood approach for low rank ARMAX graphical estimation, when the data is enough. 

\subsection{Convergence analysis}\label{subsec:ARMAX_convergence}

Though the non-convexity makes our low rank ARMAX estimation by maximum likelihood difficult to obtain an optimal solution, a consistent estimate can be obtained for problem \eqref{eq:MLOriginalProblem} subject to \eqref{eq:Acons}\eqref{eq:Bcons} with enough sampling data. The statement will be proven in the following, based on some natural assumptions referring to \cite{Hannan76}.


\begin{assp}\label{assp1}
For the second-order stationary stochastic processes ${y}_l(t)$, $e_m(t)$ in low rank graphical ARMAX model \eqref{eq:ARMAX}, where we have assumed $e_m(t) \sim N(0, I_m)$, $e_m(t)$ and $y_l(t)$ independent, the following assumptions are needed to guarantee the consistency:
\begin{subequations}
\begin{align}
\label{eq:assp_e}
   \E\{e_m(t)e_m(s)'\}=\delta_{ts}I_m,\quad \delta_{ts}=\left\{\begin{array}{cc}
                                                                    1, & t=s, \\
                                                                    0 & {\rm otherwise};\end{array}\right. 
\end{align}
\begin{align}\begin{split}
  \Gamma_{y_l}(t) := & \lim_{N\rightarrow \infty}~\frac{1}{N}\sum^{N}_{n=1} y_l(n)y_l(n+t)' \\
  = & \int_{-\pi}^{\pi} e^{ij\omega}\Phi_{y_l} \df\omega, \quad \forall j \in \Zbb_{+}; 
\end{split}\end{align}
\begin{align}
  \E\{e_m(t) | \mathcal{F}_{n-1}(e_m) \}=0; 
\end{align}
\begin{align}\label{eq:assp_51}
\begin{split}
   \sigma_{ijk}(t):= & \E \{e_{(i)}(n)e_{(j)}(n)e_{(k)}(n-t) \} \\
  &{\rm ~exists,~for~} t> 0,~i, j, k\leq m; 
\end{split}\end{align}
\begin{align}\label{eq:assp_52}
\begin{split}
   \sigma_{ijkp}(t,s):= & \E \{e_{(i)}(n)e_{(j)}(n)e_{(k)}(n-t)e_{(p)}(n-s) \} \\
  &{\rm ~exists,~for~} t,s > 0,~i, j, k, p \leq m;
\end{split}\end{align}
\begin{align}\label{eq:assp_61}
  & \lim_{k\rightarrow \infty} \E \{e_m(n)e_m(n)' | \mathcal{F}_{n-k} \} = I_m, {\rm ~a.s.};\\ \label{eq:assp_62}
  & \lim_{p \rightarrow \infty}\E \{e_{(i)}(n)e_{(j)}(n)e_{(k)}(n-t) | \mathcal{F}_{n-p} \} = \sigma_{ijk}(t), {\rm ~a.s.}; 
\end{align}
\end{subequations}
where $\mathcal{F}_{n}$ denotes the $\sigma$-algebra generated by $e_m(s)$, for $s \leq n$.
\end{assp}

In abundance of literature on factor analysis model estimation, the process $e(t)$ or $e_m(t)$ is assumed to be ergodic. This is a stronger assumption than strictly stationary as in \eqref{eq:assp_e} and what we assume in this paper, which is not necessary for consistency. Assumption 
\begin{align}\label{eq:assp_7}
   \E \{e_m(n)e_m(n)' | \mathcal{F}_{n-1} \} = I_m,
\end{align}
is a sufficient condition for \eqref{eq:assp_51}, \eqref{eq:assp_52}, \eqref{eq:assp_61} and  \eqref{eq:assp_62}. Hence either pair of the above assumptions can be made to guarantee consistency.

Denote by $\hat{\varTheta} = (\hat{A}(z^{-1}), \hat{B}(z^{-1}))$, i.e., the estimate of polynomial matrices $(A(z^{-1}), B(z^{-1}))$ from \eqref{eq:MLOriginalProblem} under constraints \eqref{eq:Acons}\eqref{eq:Bcons}, and by $\varTheta_0$ the matrices $(A(z^{-1}), B(z^{-1}))$ with true parameters. Obviously $\varTheta_0$ satisfies the topology constraints. Then we have the following theorem on consistency.

\begin{thm}[Consistency of the maximum likelihood estimate] \label{thm:consistency}
Suppose Assumption~\ref{assp1} is satisfied and system~\eqref{eq:ARMAX} is identifiable. 
Given enough sampling data of $\zeta_m$ and real data of $y_l$,
a consistent estimate $\hat{\varTheta}$ can be obtained by using maximum likelihood on low rank graphical ARMAX estimation problem \eqref{eq:MLOriginalProblem}, under constraints \eqref{eq:Acons}\eqref{eq:Bcons} on model \eqref{eq:ARMAX}, i.e., $\hat{\varTheta} \rightarrow \varTheta_0$ a.s. (with probability one) for $N\rightarrow \infty$.
\end{thm}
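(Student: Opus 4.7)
The approach is to embed the estimator into the classical M-estimator / pseudo-maximum-likelihood framework developed in \cite{Hannan76,Hannan80,Chen93}, and to check that our constrained low-rank ARMAX problem satisfies the required regularity conditions, with the extra work concentrated on translating the coprimeness/rank condition from Theorem~\ref{thm:identifiability} into uniqueness of the population criterion. Schematically, I would argue:
\begin{enumerate}
\item $J_N(\Theta) \to \bar J(\Theta)$ a.s., uniformly over a compact parameter set;
\item $\bar J$ has a unique minimizer at $\Theta_0$;
\item hence $\hat\Theta_N = \arg\min J_N \to \Theta_0$ a.s.
\end{enumerate}

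For step~1, $x(t,\Theta) = \zeta_m(t) + \Theta z(t)$ is a linear functional of the stationary processes $\zeta_m$ and $y_l$, so under Assumption~\ref{assp1} (in particular the conditional-moment conditions \eqref{eq:assp_61}--\eqref{eq:assp_62} and the absolute summability inherited from stability of $A(z^{-1})$), the ergodic/strong law for stationary processes gives
\begin{equation*}
\frac{1}{N}\sum_{t=1}^{N} x(t,\Theta)\,x(t,\Theta)' \;\longrightarrow\; \bar\Sigma(\Theta):=\E\{x(t,\Theta)x(t,\Theta)'\} \quad\text{a.s.,}
\end{equation*}
uniformly on compacta, by a standard dominated-convergence + equicontinuity argument (the map $\Theta \mapsto x(t,\Theta)$ is affine). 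Consequently
\begin{equation*}
J_N(\Theta) \;\longrightarrow\; \bar J(\Theta) = \log\det\Gamma(\Theta) + \trace\!\bigl(\Gamma(\Theta)^{-1}\bar\Sigma(\Theta)\bigr)
\end{equation*}
uniformly a.s. on any compact feasible set defined by \eqref{eq:Acons}, \eqref{eq:Bcons}.

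For step~2, at $\Theta=\Theta_0$ one has by construction $x(t,\Theta_0)=A_0(z^{-1})e_m(t)$ with $\bar\Sigma(\Theta_0)=\Gamma(\Theta_0)$ by \eqref{eq:Gamma}, so $\bar J(\Theta_0)=\log\det\Gamma(\Theta_0)+m$. For any other feasible $\Theta$, decompose
\begin{equation*}
x(t,\Theta) \;=\; A_0(z^{-1})e_m(t) + (\Theta-\Theta_0)z(t).
\end{equation*}
Passing to the frequency domain, the spectrum of $x(t,\Theta)$ is $\Phi_{x,\Theta}(\omega)=\Phi_{x,\Theta_0}(\omega)+\Phi_{\Delta}(\omega)$ for a nonnegative residual $\Phi_{\Delta}(\omega)$, because the excess term $(\Theta-\Theta_0)z(t)$ is measurable with respect to $\{e_m(s):s<t\}\cup\{y_l(s):s\in\mathbb Z\}$ whose innovations are orthogonal to the white input driving $x(t,\Theta_0)$ at time $t$. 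Integrating yields $\bar\Sigma(\Theta)\succeq \Gamma(\Theta_0) + \Phi_{\Delta,0}$ where $\Phi_{\Delta,0}=\int \Phi_{\Delta}\,d\omega/(2\pi)\succeq 0$, and applying the matrix inequality $\log\det G+\trace(G^{-1}S)\geq \log\det S + m$ (with equality iff $G=S$) to $G=\Gamma(\Theta)$, $S=\bar\Sigma(\Theta)$ gives $\bar J(\Theta)\geq \bar J(\Theta_0)$ with equality forcing both $\Gamma(\Theta)=\bar\Sigma(\Theta)$ and $\Phi_{\Delta}\equiv 0$. The latter forces $(\Theta-\Theta_0)z(t)=0$ in the mean-square sense, i.e.\ the transfer function $\hat A^{-1}\hat B$ agrees with $A_0^{-1}B_0$ on the support of $\Phi_{y_l}$ and of past $\zeta_m$; the identifiability Theorem~\ref{thm:identifiability}, together with the structural constraints \eqref{eq:Acons}--\eqref{eq:Bcons} that fix coprimeness and the leading-coefficient rank, then promotes this to $\Theta=\Theta_0$.

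Combining uniform a.s. convergence (step~1), a unique well-separated minimizer of the limit (step~2), and continuity of $\bar J$ on the compact parameter set, a routine argmin-continuity argument (e.g.\ the consistency lemma of van der Vaart) yields $\hat\Theta_N\to\Theta_0$ a.s., completing the proof. I expect step~2 to be the main obstacle: the criterion \eqref{eq:J(A,B)} is a \emph{pseudo}-likelihood in which $x(t)$ is treated as i.i.d.\ although it is in fact an MA$(q)$ process; the nontrivial part is verifying that, despite this mis-specification in the temporal correlation, the equality case in the matrix inequality above still collapses onto the true parameter once Theorem~\ref{thm:identifiability} is invoked, and the shared-$A$ structure (rather than independent AR and MA polynomials) does not introduce spurious minimizers.
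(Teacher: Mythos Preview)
Your high-level strategy matches the paper's: both embed the problem in the Hannan-type M-estimator framework, establish convergence of $J_N$ to a population criterion, and argue that the population criterion is uniquely minimized at $\varTheta_0$. The paper likewise leans on \cite{Hannan76,Hannan80} for the machinery.

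However, your step~2 contains a genuine gap. The claimed spectral decomposition $\Phi_{x,\Theta}=\Phi_{x,\Theta_0}+\Phi_\Delta$ with $\Phi_\Delta\succeq 0$ does not hold. Your justification is that $(\Theta-\Theta_0)z(t)$ is measurable with respect to $\{e_m(s):s<t\}\cup\{y_l(s):s\in\Zbb\}$ and hence orthogonal to the time-$t$ innovation $e_m(t)$. But $x(t,\Theta_0)=A_0(z^{-1})e_m(t)$ is an MA$(q)$ process, not white noise: it contains $e_m(t-1),\ldots,e_m(t-q)$. Since $z(t)$ carries $\zeta_m(t-1),\ldots,\zeta_m(t-q)$ and $\zeta_m(t-j)=y_m(t-j)+e_m(t-j)$, the excess term $(\Theta-\Theta_0)z(t)$ and $x(t,\Theta_0)$ share dependence on the \emph{same} past noise samples and are therefore correlated. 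The cross-spectrum does not vanish, so the $\succeq$ ordering you need before invoking the matrix inequality $\log\det G+\trace(G^{-1}S)\geq\log\det S+m$ is unavailable. This is exactly the difficulty you anticipate at the end --- the shared-$A$ structure makes $x(t,\Theta_0)$ coloured --- but your proposed resolution does not survive it.

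The paper circumvents this by invoking \cite[Lemma~3]{Hannan76} to identify the limit of the quadratic term directly as a frequency-domain functional of $A(\cdot,\varTheta)^{-1}$ against $A(\cdot,\varTheta_0)$, and then pins down the minimizer via a lower-bound argument on $J_N$ using the smallest eigenvalue of $\Gamma(\varTheta)$ together with the adjugate of $A$ (the construction on \cite[p.~285]{Hannan80}), rather than through any additive spectral decomposition of $x(t,\Theta)$. A secondary issue: you assume a compact feasible set without justification, whereas the paper handles possible non-compactness through Hannan's coordinate-neighborhood covering of the manifold ${\bf \Theta}(m_c)$ and shows the global minimum must eventually lie in the neighborhood $\Ncal_0$ of $\varTheta_0$.
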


\begin{proof}
Denote by ${\bf \Theta}(m_c)$ the analytic manifold constituted by the feasible space of polynomial matrices $(A, B)$ under the constraints, where $m_c$ is the McMillan degree of $A$. 
Based on the insightful construction details on \cite[p. 280]{Hannan80}, ${\bf \Theta}(m_c)$ may be covered by $\left(\begin{matrix}m_c+m-1 \\ m-1 \end{matrix}\right)$ coordinate neighborhoods. 
Construct an open neighborhood $\mathcal{N}_0$ of $\varTheta_0$ by taking one of the coordinate neighborhoods. 
Thus we may find a closed set, $L$ in $\mathcal{N}_0$, containing an open set containing $\varTheta_0$. 
Then the remainder of ${\bf \Theta}(m_c)$ can be covered by $\left(\begin{matrix}m_c+m-1 \\ m-1 \end{matrix}\right)-1$ open coordinate neighborhoods, denoted by $\mathcal{N}_k$ ($k \geq 1$), choosing these not to intersect with $L$.

To prove the consistency, we have to show that the absolute minimum overall ${\bf \Theta}(m_c)$ is attended in $\mathcal{N}_0$ for $N$ large enough. 

Denote by $\hat{\varTheta}(k)$ the optimising solution of problem \eqref{eq:MLOriginalProblem} in the region $\Ncal_k \cap {\bf \Theta}(m_c)$. For $\forall \varTheta \in \Ncal_k \cap {\bf \Theta}(m_c)$, it is obvious that for the greatest lower bound
\begin{align}\label{eq:11}
    \overline{\lim}_{N \rightarrow \infty}~J_N(\hat{\varTheta}(k)) \leq \overline{\lim}_{N \rightarrow \infty}~J_N(\varTheta),
\end{align}
where $\overline{\lim}$ denotes the superior limit. 
From \cite[Lemma 3]{Hannan76}, we have
\begin{align}\nonumber
\begin{split}
    \lim_{n\rightarrow \infty}  \frac{1}{N}  \sum_{t=1}^{N} & x(t,\Theta)'\Gamma^{-1}(\Theta)x(t, \Theta) =: J_2^\circ(\varTheta)\\
    & = \left<A(e^{-i\theta},\varTheta)^{-1}, A(e^{-i\theta},\varTheta_0) \right>,
\end{split}
\end{align}
and $\inf J_2^\circ(\varTheta)=J_2^\circ(\varTheta_0)$,
with $A(e^{-i\theta},\varTheta)$ denoting $A(z^{-1})$ with $z^{-1}=e^{-i\theta}$ and parameter values taken from $\varTheta$.
Hence 
\begin{align}\label{eq:12}
\begin{split}
   \inf_{\varTheta \in \Ncal_0}~\overline{\lim}_{N \rightarrow \infty} ~J_N(\varTheta) & = \log \det \Gamma(\varTheta_0) + {J_2^\circ}(\varTheta_0) \\
   & = \log \det \Gamma(\varTheta_0) + m.
\end{split}
\end{align}

Denote by $\lambda_{m}(\cdot)$ the smallest eigenvalue of a matrix, then we easily have $\lambda_{m}(\Gamma)>0$ from \eqref{eq:Gamma}.
Consider for any $\varTheta \in {\bf \Theta}(m_c)$,
\begin{align}\nonumber
    J_N(\varTheta)   \geq \log \det \Gamma(\varTheta) + \frac{1}{\gamma} \lambda_{m}(\Gamma(\varTheta))^{-1}\lambda_{m}(\mathcal{P}(\varTheta)),
\end{align}
where  
\begin{align}\nonumber
    \mathcal{P}(\varTheta)=\sum_{j=0}^{d}\sum_{k=0}^{d} U(j)\left[ N^{-1}\sum_{t=V+1}^{N}  x(t-j)x(t-k)' \right] U(k)', 
\end{align}
$\Gamma(\varTheta)$ is given in \eqref{eq:x(t)dist} by parameter matrices,
$\sum U(j)z^{-j} = {\rm adj}(A(z^{-1}))$ (i.e., the adjugate of $A$), $\gamma \geq |\det (A)|^2$, $d$ is the degree of a trigonometric polynomial $M_T(z^{-1})$, which satisfies $\varepsilon_M I_m \succeq M_T(z^{-1})- (A^{-1}BB^*A^{-*})^{-1} \succeq 0$, with $\varepsilon_M > 0$ a small constant.

Then from a tedious but direct procedure see such as in \cite[p. 285]{Hannan80}, it can be shown that there is $\varepsilon >0$, $N_0$ with $p(N_0 < \infty)=1$, such that for all $N \leq N_0$, $\lambda_{m}(\mathcal{P}(\varTheta))\geq \varepsilon$ for all $\varTheta$. Hence we have
\begin{align}\nonumber
  J_N(\varTheta) \geq m \log \lambda_{m}(\Gamma(\varTheta)) + {\varepsilon}{\gamma} ^{-1} \lambda_{m}(\Gamma(\varTheta))^{-1}.
\end{align}
By choosing suitable values of coefficients $\varepsilon$ and $\gamma$, the lower bound of $J_N$ satisfies
\begin{align}\label{eq:15}
  \underline{\lim}_{N \rightarrow \infty}~J_N(\hat{\varTheta}(k)) = \log \det \Gamma(\varTheta_0) + m,
\end{align}
only if $\hat{\varTheta}(k) \rightarrow \varTheta_0$. 
Note that for \eqref{eq:15}, we have to show that initial conditions can be ignored. This follows directly from \cite[p. 286]{Hannan80}.

From \eqref{eq:11}, \eqref{eq:12} and \eqref{eq:15}, we have 
\begin{align}
  {\lim}_{N \rightarrow \infty}~J_N(\hat{\varTheta}(k)) = J_N({\varTheta}_0),
\end{align}
leading to the consistency of maximum likelihood estimate on model \eqref{eq:ARMAX} in this paper.
\end{proof}

In the two-stage identification of the low rank process $y(t)$, which is sampled under noise,
we can only use filtered or estimated data $\hat{y}_l$ to solve this second graphical ARMAX estimation subproblem \eqref{eq:MLOriginalProblem}. 
From the proof of Theorem~\ref{thm:consistency}, it can be readily established that 
the proposed maximum likelihood approach using $\hat{y}_l$, still yields a unique convergent parameter estimate with enough data. 
As will be illustrated in Section~\ref{sec:example}, 
the $\hat{y}_l$ we designed in \eqref{eq:ylhat_new} brings the estimates in problem \eqref{eq:MLOriginalProblem} close to the real parameter values and real data, under an appropriate level of noise.

The above result can be verified from the aspect of equilibrium points. When $N\rightarrow \infty$, $\frac{1}{N}\sum_{t=1}^{N}x(t)x(t)'\rightarrow \Gamma$. 
Then problem \eqref{eq:J(A,B)} has only one equilibrium point from condition \eqref{eq:equilibrium},
\begin{align}
    \varTheta_0= -\left(\sum_{t=1}^{N}\zeta_m(t)z(t)' \right)\left(\sum_{t=1}^{N} z(t)z(t)' \right)^{-1}.
\end{align}
The satisfactory  convergence properties will also be demonstrated in Section~\ref{sec:example} by examples.

\section{Simulation examples}\label{sec:example}
\subsection{Example 1: a simulation example}
In this subsection, a simulation example will be presented on identifying a low rank vector process under measurement noise. 
The performance of Algorithm~1 will be evaluated. 
Furthermore, a comparative analysis will be provided between our approach considering measurement noise, and the traditional low rank (LR) identification approach presented in~\cite{CPLauto23}, which does not explicitly address the issue of noise.

A low rank stationary vector process $y(t)$ of dimension $7$, with a spectral density of rank $3$ is considered. 
The latent variable $y_l(t)=[y_{(4)}(t), y_{(5)}(t), y_{(6)}(t)]'$ is driven by 
$$
y_l(t)=W_l(z)\tilde{w}(t),
$$
with $\tilde{w}(t)$ a $3$-dimensional  i.i.d. random Gaussian process with variance $I$. $W_l$ is a sparse minimum-phase function matrix with
\begin{align}\nonumber
  & [W_l]_{11}=\frac{z^3}{z^3-0.8z^2-0.25z+0.2}, \quad [W_l]_{21}= \frac{z}{z^2-0.25}, \\ \nonumber
  & [W_l]_{22}= \frac{z^2}{z^2+0.3-0.1}, \quad [W_l]_{33}= \frac{z^2}{z^2-0.64},
\end{align}
and the other entries $0$.
The deterministic relation $H(z)$ is given by \eqref{eq:H(i)A(i)B(i)} with $A_{(i)}$, $B_{(i)}$ given in TABLE~\ref{tabel:H}. 
Based on Theorem~\ref{thm:identifiability}, the corresponding low rank ARMAX graphical model \eqref{eq:ARMAX} is identifiable.
Measurement noise $e(t)$ is of variance $\sigma^2I$ with $\sigma=0.1$.
Suppose the dependence topology inside the low rank latent variable (i.e., the zero entries of $\Phi_{y_l}^{-1}$) is known, and the topology inside the deterministic relation (i.e., the zero entries of $H(z)$) is given. 
%
%

\begin{table}[t]\centering
\caption{Value of $H(z)$ in Example 1.} \label{tabel:H}
\renewcommand\arraystretch{1.2}
\begin{tabular}{c|c|c}
    \toprule
       $i$       & $A_{(i)}$   & $B_{(i)}$\\ \hline \hline
      $1$ &  \scriptsize $z-0.9$   &  \scriptsize $[0.5z+0.15, ~0,~0]$ \\ \hline
      $2$ &  \scriptsize $z^2-0.64$         & \scriptsize $[z^2+0.3z-0.4 , ~z^2-0.3z-0.4,~0]$ \\ \hline
      $3$ &  \scriptsize $z^2-0.1z-0.42$  &  \scriptsize $[0, ~z^2-0.5z-0.14, ~z^2+z+0.24]$ \\ \hline
      $4$ &  \scriptsize $z^2+0.2z-0.48$   & \scriptsize \makecell{$[z^2-1.1z+0.3,  z^2-0.1z-0.3,$\\ $-2z^2-2z -0.32]$}\\
     \bottomrule
          \end{tabular}
\end{table}

In the overall example, the LRG estimation is realized by Algorithm~1, with Newton's method with equality constraints chosen for solving both optimization problem \eqref{eq:dualARgraph} and problem \eqref{eq:MLOriginalProblem} subject to \eqref{eq:Acons}\eqref{eq:Bcons}. 
The traditional LR estimation is realized by the procedure proposed in \cite{CPLauto23}, which 
estimates $y_l$ by regarding $\zeta_l(t)$ its observation neglecting the noise $e(t)$ in modeling, and then estimates $H(z)$ by least squares based on the data of  $\zeta_m$ and $\hat{y}_l$ from \eqref{eq:ylhat_old}. 

Given the measurement time series $\{\zeta(t),~\text{for~} t=1, \cdots, N\}$ with $N=1000$, LRG and LR identifications are run, respectively.
In the identification of the low rank latent variable, we use AR models with degree $n=5$ to estimate the dynamics of $y_l$.
The tolerance parameters for stopping the Newton's method optimizations are both set to be $\epsilon=10^{-12}$. The initial values of nonzero parameters in the low rank ARMAX graphical estimation are all set to be $1$.
In the identification of the internal deterministic relation $H(z)$, we suppose the degree of each entry in $A(z^{-1})$ and $B(z^{-1})$ is known, to better present the performance of the algorithms.

In an arbitrary run, parameters in models are identified, and the estimated time series $\hat{y}(t)$ are calculated. The estimations of low rank variable $y_l(t)$ and $y_m(t)$ are partly shown in Fig.~\ref{fig:y_hat}. In both the estimation of $y_l(t)$ and $y_m(t)$, algorithm LRG provides estimated time series highly close to the real data hidden behind the observed noise.
Because of overlooking the measurement noise in modeling, LR can only provide rough estimates of $y_l$ and $y_m$, which have approximate variation tendency with the real data, but inaccurate data estimations at each time step.
In several times of experiments, we find $\hat{y}_{(4)}(t)$ of LR diverges to values of a large order of magnitudes like $10^{21}$.
The reason is that the inaccuracy of $\hat{y}_l(t)$ caused by the neglect of $e_l(t)$ and $w(t)$ leads to expanded errors in identifying $H(z)$ without considering $e_m(t)$ specially.

\begin{figure*}[t]
\subfigure[The real values and estimations of $y_l(t)$.]{\hspace{-3mm}
        \includegraphics[scale=0.66]{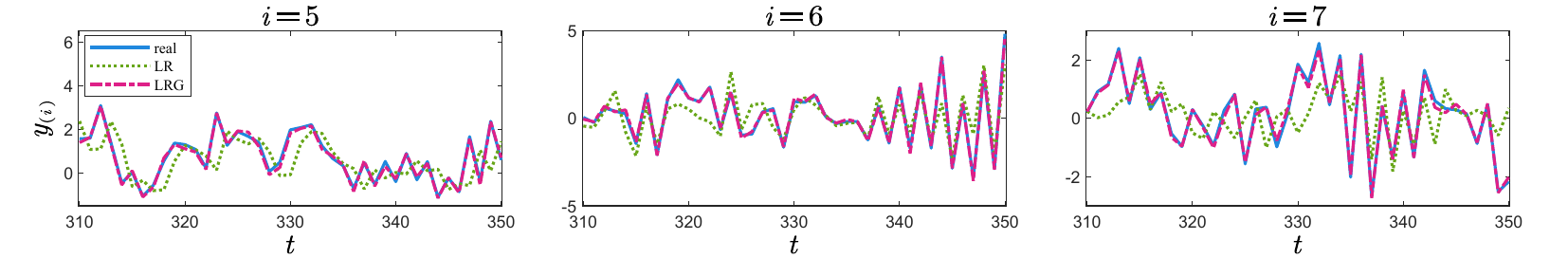}} \\ 
\subfigure[The real values and estimations of $y_m(t)$.]{ \hspace{-4.3mm}
        \includegraphics[scale=0.66]{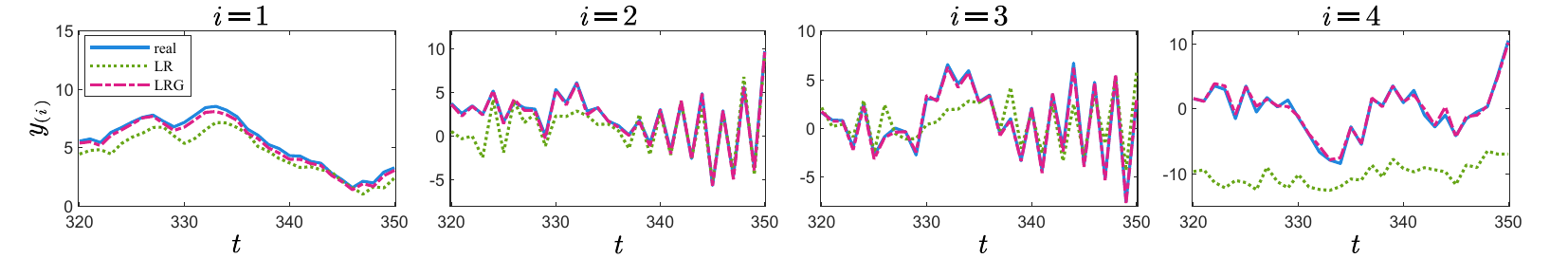} }
    \caption{Example 1: the estimation in one MC experiment of $y_l(t)=[y_{(5)}(t),~y_{(6)}(t),~y_{(7)}(t)]'$ in time period $t=310,\cdots, 350$, and of $y_m(t)=[y_{(1)},~y_{(2)},~y_{(3)},~y_{(4)}]'$ in time period $t=320,\cdots, 350$. The blue solid line denotes the real data of $y_{(i)}(t)$. The green dotted line and the pink dotted-dashed line denotes the estimated $y_{(i)}(t)$ by traditional low rank identification and the low rank graphical identification, respectively.}\label{fig:y_hat}
\end{figure*}

Therefore, even under a reasonable scale of measurement noise, LRG can provide a much superior estimation of the low rank time series $y(t)$ than LR. This result verifies the necessity of considering low rank graphical models with noise process $e(t)$ in identification issues under measurement noise, instead of identifying low rank processes directly. 
The results also highlight the potential of the algorithm LRG as a highly accurate filter for time series with internal correlations.

Next, with the above settings, we run $\text{MC}=20$ times Monte Carlo (MC) experiments to test the model estimation performance. 
For the AR (graphical) estimation of the low rank latent variable, since the minimum phase estimation of $P(z^{-1})^{-1}$ and hence $P(z^{-1})$ are not unique, we compare the (unique) estimated coherence of $y_l$ to examine.
Based on the average values of estimated parameters in MC experiments, the curves of estimated coherence between $y_{(k)}$ and $y_{(h)}$ are given in Fig.~\ref{fig:coh}
for ${S}_1:=\{(k,h) \in E^{y_l}, k\geq h \}$, $\theta\in [0, \pi]$.
The normalized average of integrated square error of coherence is calculated, to evaluate the error in low rank variable estimation, defined as
\begin{subequations}\label{eq:err_Phi}
\begin{align}
  \overline{err}_{\Phi}(y_l):= \frac{1}{ |{S}_1| \cdot \Vert \Phi_{y_l}(e^{i\theta}) \Vert}  \sum_{(k,h)\in {\bf \mathcal{S}}_1} err_{\Phi}(y_{(k)}, y_{(h)}),
\end{align}
with 
\begin{align}
    err_{\Phi}(y_{(k)}, y_{(h)}) := \int_{-\pi}^{\pi} \left( \big[\hat{\Phi}_{y_l}(e^{i\theta})-{\Phi}_{y_l}(e^{i\theta})\big]_{kh} \right)^2 \df \theta.
\end{align}
\end{subequations}

\begin{figure}
\hspace{-2.8mm}
\includegraphics[scale=0.6]{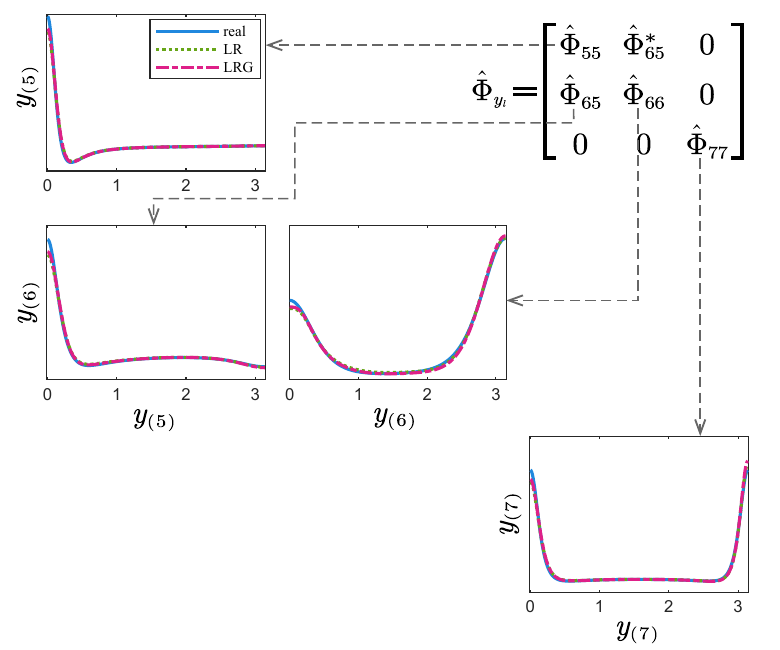}
\caption{Example 1: coherence of $y_l(t)$ calculated by real or estimated AR (graphical) models in MC experiments.}\label{fig:coh}
\end{figure}

In Fig.~\ref{fig:coh}, the coherence curves of estimated AR models by LR and LRG both fit closely the coherence curves of the real model of $y_l$.
The results show that both LR and LRG have good performance in identifying an innovation model for the low rank variable $y_l$ in $y$, under a reasonable scale of measurement noise. 
$\overline{err}_{\Phi}(y_l)$ of LR and LRG are shown in TABLE~\ref{tabel:crt}. The normalized errors of coherence estimation are both low, and LRG is a little bit better than LG.
However, as shown in Fig.~\ref{fig:y_hat} (a), LG can not provide good estimations of the time series by overlooking measurement noise from the modeling step, even though having nice performance in 
parameter identification.

\begin{table}[t]\centering
\caption{Quantitative evaluations of the estimates in Example 1} \label{tabel:crt}
\renewcommand\arraystretch{1.4}
\begin{tabular}{c|cc|cc}
    \toprule
       \multirow{2}*{Algo. }      & \multicolumn{2}{c|}{Estimates of parameters}   & \multicolumn{2}{c}{Estimates of $y(t)$}\\ \cline{2-5} 
            & $\overline{err}_{\Phi}(y_l)$ & ${fit}(\hat{H})$  & $\overline{fit}(\hat{y}_l)$ & $\overline{fit}(\hat{y}_m)$ \\ \hline \hline
         LR  & $1.13\times 10^{-2}$ & $-103.41$  & $50.96$ & $-3.00\times 10^{104}$  \\
         LRG & $5.41\times 10^{-3}$ & $98.22$  & $95.10$ &  $95.43$  \\
    \bottomrule
          \end{tabular}
\end{table}

For the estimation of $H(z)$, we use the fit of $\hat{\Theta}$ in low rank graphical ARMAX model \eqref{eq:ARMAX} for LRG, or $\hat{\Theta}$ composed of the parameters of  $\hat{A}(z^{-1})$, $\hat{B}(z^{-1})$ in an ARX model between $y_m$ and $y_l$ for traditional LR.
Define
\begin{align}\label{eq:fit_H}
 {fit}(\hat{H}) :=  100\left( 1- \frac{ \Vert \hat{\Theta}- {\Theta} \Vert_2}{\Vert {\Theta} \Vert_2} \right).
\end{align}

The calculated ${fit}(\hat{H})$ for LRG and LR are given in TABLE~\ref{tabel:crt}, from which we can see that the average of estimates in MC experiments of algorithm LRG provides a highly accurate estimate of function $H(z)$ with fit $98.22$, 
revealing that low rank graphical ARMAX identification can extract successfully the internal correlation inside a low rank process measured under noise.
In contrast, algorithm LR provides unsatisfactory estimates in identifying $H(z)$, implicating that identifying $H(z)$ based on \eqref{eq:H(z)ARMAX} directly does not work well when there are measurement noise.
In fact, for the two-stage approach LR, the estimation error of $\hat{y}_l$ is also a non-negligible factor inducing the poor estimate of $H(z)$.

To deeply evaluate the performance of low rank ARMAX graphical identification proposed in Section~\ref{sec:ARMAX} or line $4$-$10$ in Algorithm~1, the box plots of all parameter estimates in $H(z)$ in 20 MC experiments are drawn in Fig.~\ref{fig:Hbox} (a), where the real parameters are marked by green diamonds.
From Fig.~\ref{fig:Hbox} (a), all the estimates lie in a narrow range around the real parameters. The accuracy of a simpler model (like $H_1(z)$) and of models with more complex transfer functions (like $H_4(z)$) has no great difference.
Therefore, though the maximum likelihood estimation of low rank ARMAX graphical estimation is solved based on a nonconvex optimization problem \eqref{eq:MLOriginalProblem}, the algorithm converges to an estimate extremely close to real parameters with $N=1000$ data for $30$ unknown parameters. 
Note that, we tried several different initial points in some trials, in all the trails the low rank ARMAX graphical estimation converges to estimates close to the real parameters. 
The results are in accordance with Theorem~\ref{thm:consistency}.


\begin{figure*}[t]
\subfigure[Example 1: $N=1000, \sigma=0.1$.]{\hspace{-1.8mm}
        \includegraphics[scale=0.6]{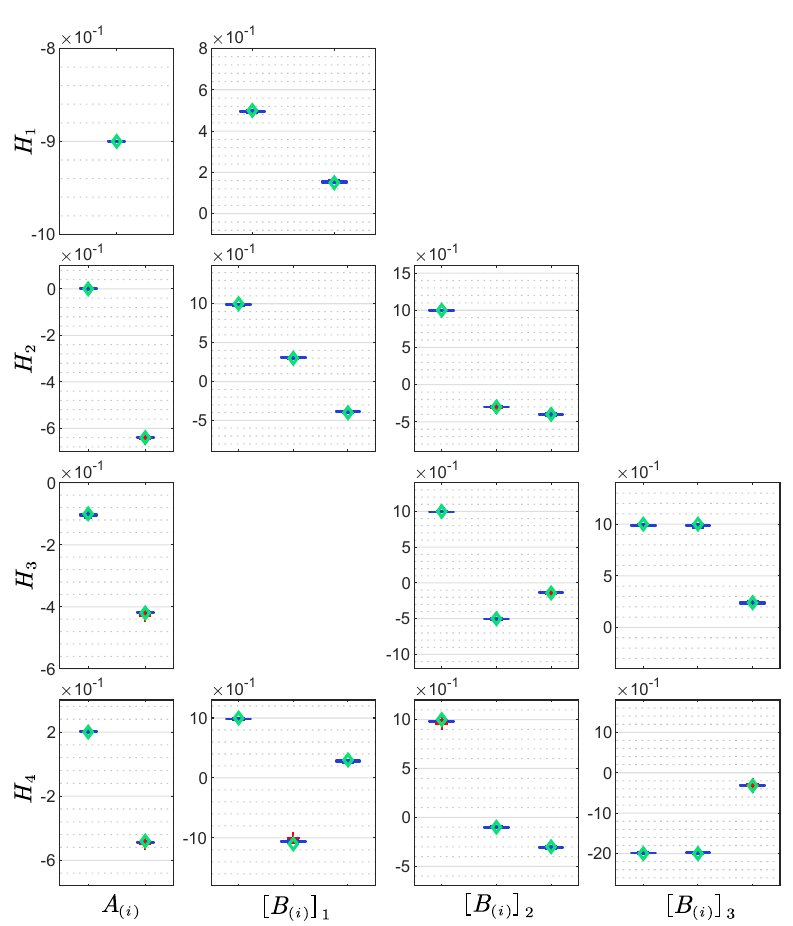}} 
\subfigure[Example 2: $N=1000, \sigma=0.9$.]{ \hspace{2mm}
        \includegraphics[scale=0.6]{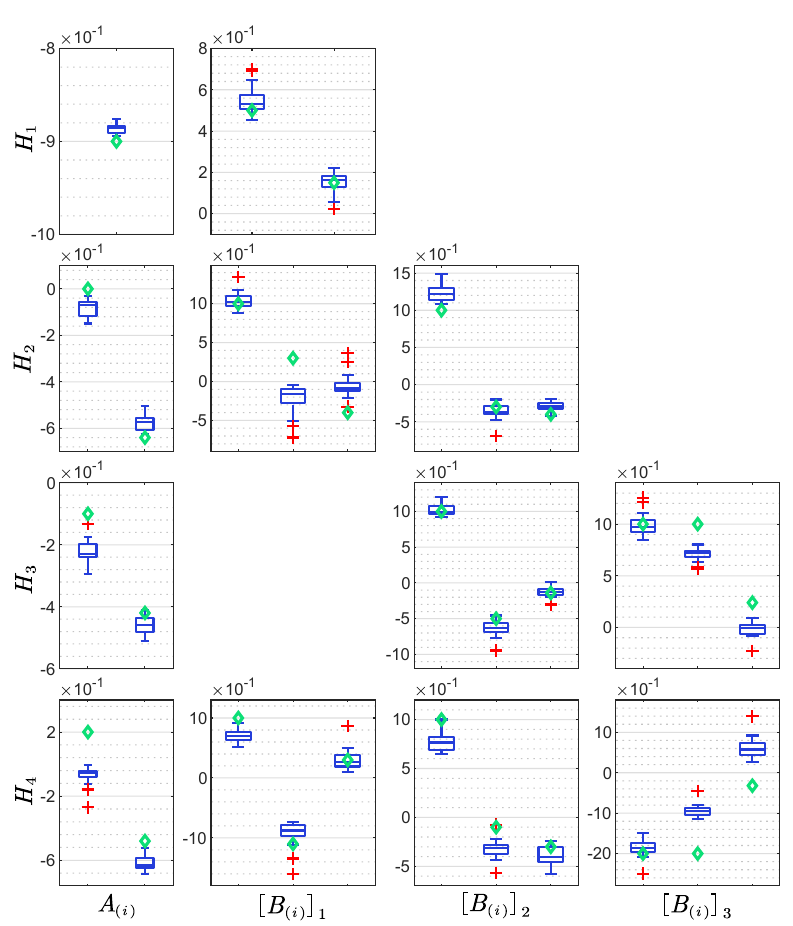} }
    \caption{Box plots of estimated parameters in $H(z)$ of Algorithm~1 in $20$ MC experiments, where the outliers are marked by red plus sign, and the real values of parameters given in TABLE~\ref{tabel:H} are marked by green diamonds.}\label{fig:Hbox}
\end{figure*}

Furthermore, the average fits of the estimation or filtering results of the subprocesses in `hidden' processes $y(t)$ for each MC trail are compared in TABLE~\ref{tabel:crt}.
Specifically, the average fit is defined as
\begin{subequations}\label{eq:fit_y}
\begin{align}
  \overline{fit}(\hat{y}_l) = \frac{1}{MC}\sum_{n_{MC}=1}^{MC} {fit}(\hat{y_l}, n_{MC}),
\end{align}
with
\begin{align}
  &{fit}(\hat{y_l}, n_{MC}) :=   100\left( 1- \frac{ \Vert \hat{\mathcal{Y}}_l- {\mathcal{Y}}_l \Vert_2}{\Vert {\mathcal{Y}}_l\Vert_2} \right),\\ 
  &{\mathcal{Y}}_l :=  \begin{bmatrix}
                         y_l(1), \cdots, y_l(N)
                       \end{bmatrix}',
\end{align}
\end{subequations}
and $\hat{\mathcal{Y}}_l$, $\overline{fit}(\hat{y}_m)$ defined similarly. 
Note that here the estimates of $\hat{y}$ are calculated based on the different parameter estimates in each MC trail rather than an average estimate after all the MC experiments. 
Along with Fig.~\ref{fig:Hbox} (a), the results shows that our algorithm LRG has reliable performance in one trail. However, the estimates of $y(t)$ by LR have low fits. Hence LR is not appropriate to be used as a filter for the denoising of low rank processes.

In summary, this simulation example provides an illustrative realization of our algorithm, demonstrating its effectiveness in estimating low rank graphical models.
The results confirm the superior performance of our algorithm in estimating a low rank process $y(t)$ under measurement noise, achieved by considering the low rank graphical model instead of the traditional low rank realization, and by designing and solving specialized problems.
In contrast, traditional LR identification performs surprisingly poorly, even yielding unacceptable estimates under noise, particularly when estimating the internal correlations and filtering the time series. 
Therefore, even a common level of measurement noise warrants special consideration in the identification and filtering of low rank processes.
The low rank graphical model \eqref{eq:zeta=yl+e} is a reliable and a superior choice on modeling, filtering and identification for low rank
processes under measurement noise.

\subsection{Example 2: sensitive analysis}
In this example we mainly evaluates the performance of the low rank graphical identification w.r.t. the level of measurement noise.
Maintain the system settings as they were in Example~1, change the data size $N$ and the scale coefficient $\sigma$ of the standard deviation of measurement noise $e(t)$.
The specific optimization approaches are chosen as the realization of low rank graphical identification in Algorithm~1.
In particular, our algorithm is realized under different levels of measurement noise in $[0.001, 1]$, and with $N$ chosen in set $\{250, 500, 1000\}$. 
For each value of $\sigma$ taken, $20$ MC experiments are performed, 
and the evaluation criteria $\overline{err}_\Phi (y_l)$, $fit(\hat{H})$ for parameter estimation, and $\overline{fit}(\hat{y}_l)$, $\overline{fit}(\hat{y}_m)$ for process estimation
are calculated based on \eqref{eq:err_Phi}\eqref{eq:fit_H}\eqref{eq:fit_y} accordingly.
The results are shown in Fig.~\ref{fig:ex2_cre} with a logarithmic horizontal axis. 

\begin{figure}
\hspace{-1.8mm}
\includegraphics[scale=0.66]{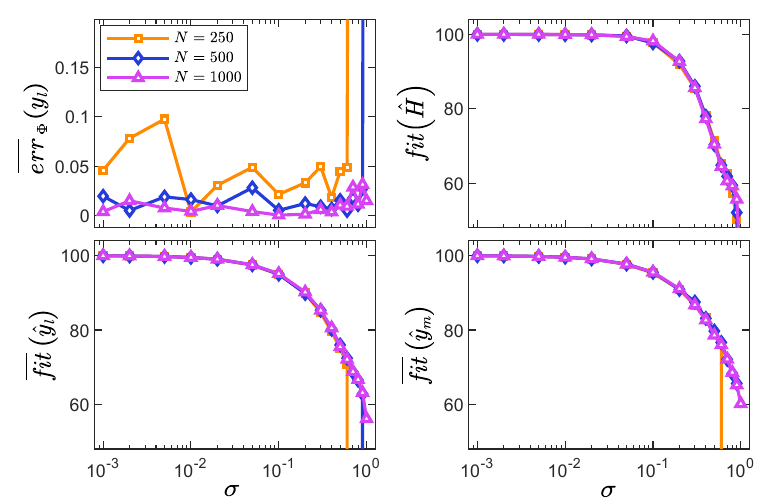}
\caption{Example 2: quantitative evaluation criteria of low rank graphical identification using different size of data $N=250, 500$ and $1000$ realized by Algorithm~1, with respect to $\sigma$. $\overline{err}_\Phi(y_l)$, $fit(\hat{H})$ are criteria for parameter estimation, $\overline{fit}(\hat{y}_l)$, $\overline{fit}(\hat{y}_m)$ are criteria for process estimation. }\label{fig:ex2_cre}
\end{figure}

From Fig.~\ref{fig:ex2_cre},  when $\sigma \leq 0.6$,
the filtering procedure and our ARMAX graphical estimation of parameters in $H(z)$ have similar performances for all the three $N$ we choose. 
The normalized average error of coherence estimation $\overline{err}_\Phi (y_l)$ of $y_l$ decreases as $N$ increases, and it remains below 0.04 for $\sigma \leq 0.5$ and $N=500, 1000$. 
When $\sigma >0.6$, estimations under a smaller size of data sharply deteriorate with the increase of $\sigma$.

For $N=1000$, our algorithm has a smoother performance w.r.t. $\sigma$. 
In this case, the coherence estimation of $y_l$ keeps having a low error less than 0.04. 
A steady good performance is gained on both the ARMAX graphical estimation and process filtering under a lower level of noise. 
For $\sigma\in [0.001, 0.2]$, the fits of $\hat{H}$, and of two estimated sub-processes $\hat{y}_l$ and $\hat{y}_m$ all keep greater than $90$.
For $0.2<\sigma\leq 0.6$, all the fits decrease prominently as $\sigma$ increases, and $\overline{err}_\Phi (y_l)$ has an increasing tendency, indicating a degradation in algorithm accuracy with increasing non-neglectable noise. 
For $\sigma >0.6$, all the fits decrease more sharply to a nearly unacceptable value less than 70. 

To check the convergent property of ARMAX graphical estimation under a large-level noise, the box plot of the parameter estimates with $N=1000, \sigma=0.9$ are drawn in Fig.~\ref{fig:Hbox} (b), with all outliers marked.
We can find that, though the estimates are biased from the real values when $\sigma$ increases, the estimates in different MC experiments have a fair convergent performance to a suboptimal point.

In summary, the low rank graphical identification algorithm we propose in this paper performs consistently well with enough data and under low noise levels, with accuracy gradually decreasing as the noise level increases.
When noise covariance gets closer to or exceeds that of the low rank process $y(t)$,  it is recommended to consider a more accurate model structure or measuring tool, such as using a higher-precision sensor or using multi-source information in real applications, or to employ identification methods specifically designed for handling large noise.

\section{Conclusion}\label{sec:conclu}

In conclusion, to solve the problem of identifying low rank processes under measurement noise, a two-stage low rank graphical identification algorithm is proposed. 
In the first stage, an innovation model for a subprocess of the low rank process is identified by maximum entropy covariance extension, and the estimates of the subprocess with respected to time are given.
Based on the estimated time series of the subprocess, a novel low rank ARMAX graphical estimation approach based on maximum likelihood is proposed, of which the identifiability and consistency with infinite data are proven.
Simulation examples demonstrate significantly improved performance against measurement noise compared with low rank identification without considering noise. 
The rank estimation, sparsity estimation and applications of low rank graphical identification will be studied in our future work.

\appendix

\section{Some useful results}\label{apdx:thms}
\begin{thm}[Identifiability of ARMAX model, \cite{Chen93}] \label{thm:chen93}
The following system described by ARMAX model 
\begin{subequations}\label{eq:genARMAX}
\begin{align}
&A(z^{-1})y(t)=B(z^{-1})u(t)+C(z^{-1})\omega(t), \quad  t \geq 0; \\
&y(t)=\omega(t)=0,~~u(t)=0, \quad  t\leq 0,
\end{align}
\end{subequations}
where $y(t), u(t)$ and $\omega(t)$ are $m$-output, $n$-input and $m$-driven noise, respectively; $A(z^{-1}), B(z^{-1})$ and $C(z^{-1})$ are given by the following equations 
\begin{subequations}\nonumber
\begin{align}
 & A(z^{-1}) = I +A_1z^{-1} + ... + A_q z^{-q},\quad q\geq 0,\\
 & B(z^{-1}) = B_1z^{-1} + ... +  B_r z^{-r},\quad r\geq 1, \\
 & C(z^{-1}) = I +C_1z^{-1} + ... + C_p z^{-p}, \quad p\geq 0,
\end{align}
\end{subequations}
is identifiable if and only if $A(z^{-1}), B(z^{-1})$ and $C(z^{-1})$ have no common left factor and $\rank\begin{bmatrix} A_q & B_r &C_p \end{bmatrix}=m.$
\end{thm}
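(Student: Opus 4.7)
The plan is to prove both directions directly, since the general ARMAX identifiability result in Corollary~\ref{coro:chen93} does not apply off-the-shelf: the model \eqref{eq:ARMAX} has the special structure in which the noise polynomial $C(z^{-1})$ coincides with $A(z^{-1})$, so the hypothesis that $A, B, C$ be jointly left coprime fails trivially (every left factor of $A$ is also a left factor of $C = A$). Consequently the correct conditions here must involve only $(A, B)$, and the rank condition naturally reduces from $\rank[A_q\ B_r\ C_p] = m$ to $\rank[A_q\ B_r] = m$. Moreover, ``identifiability'' in this setting reduces to uniqueness of the polynomial factorization of the single transfer function $H(z) = A^{-1}B$, since the moving-average coefficient is pinned to $A$ and not a free parameter.

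For the necessity direction, I would argue by contrapositive on each hypothesis. First, if $A$ and $B$ share a non-unimodular common left polynomial factor $D$, I would write $A = D A'$, $B = D B'$ with $A', B'$ polynomial; since $D$ has degree at least one, comparing leading matrix coefficients gives $\deg A' < q$ and $\deg B' \le r$, while $A'^{-1} B' = A^{-1} B$ and $A' \neq A$, contradicting identifiability. Second, if $\rank[A_q\ B_r] < m$, I would pick a nonzero $m \times m$ constant matrix $D$ with $D [A_q\ B_r] = 0$ and form $\tilde A := (I + D z^{-1}) A$, $\tilde B := (I + D z^{-1}) B$. The new leading coefficients at degrees $q + 1$ and $r + 1$ are $D A_q$ and $D B_r$, both zero by construction, so $\tilde A$ and $\tilde B$ remain within the degree budget $(q, r)$ while producing $\tilde A^{-1} \tilde B = A^{-1} B$. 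This also contradicts identifiability.

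For the sufficiency direction, suppose $\tilde A^{-1} \tilde B \equiv A^{-1} B$ with $\deg \tilde A \le q$, $\deg \tilde B \le r$, and $\tilde A$ monic. I would set $\tilde D(z^{-1}) := A(z^{-1})^{-1} \tilde A(z^{-1})$, a priori only a rational matrix, and prove $\tilde D \equiv I$ in two stages. Stage one: left coprimeness of $(A, B)$ yields a polynomial Bezout identity $A M_1 + B M_2 = I$ with $M_1, M_2$ polynomial, so multiplying on the left by $\tilde D$ gives $\tilde D = \tilde A M_1 + \tilde B M_2$, which is polynomial in $z^{-1}$. Stage two: write $\tilde D = \sum_{k=0}^{n_d} D_k z^{-k}$ with $D_{n_d} \neq 0$, and assume toward contradiction that $n_d \ge 1$. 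Reading off the top-degree coefficients of $\tilde A = \tilde D A$ and $\tilde B = \tilde D B$, the coefficients of $z^{-(q + n_d)}$ and $z^{-(r + n_d)}$ equal $D_{n_d} A_q$ and $D_{n_d} B_r$, each of which must vanish since $\tilde A, \tilde B$ have degrees at most $q, r$. Thus $D_{n_d} [A_q\ B_r] = 0$, and the rank hypothesis forces $D_{n_d} = 0$, contradicting $n_d \ge 1$. So $\tilde D$ is a constant matrix $D_0$, and matching the constant terms in $\tilde A = D_0 A$ with $\tilde A_0 = A_0 = I$ gives $D_0 = I$, hence $\tilde A = A$ and $\tilde B = B$.

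The main obstacle I anticipate is the degree bookkeeping in both directions, particularly ensuring that the counterexample $(I + D z^{-1})(A, B)$ in necessity truly stays within the degree budget (relying on the cancellation $D [A_q\ B_r] = 0$ at both the $z^{-(q+1)}$ and $z^{-(r+1)}$ slots), and in sufficiency establishing that $\tilde D$ is polynomial rather than merely rational — here the Bezout identity from left coprimeness is essential and does not hold without it. A related subtlety is the implicit monicity assumption on $\tilde A$ needed to pin down $D_0 = I$; without normalizing $\tilde A_0 = I$, identifiability could only hold up to a constant invertible factor, so this normalization built into the definition must be invoked carefully.
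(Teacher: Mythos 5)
Your proposal does not prove the stated theorem. Theorem~\ref{thm:chen93} concerns the general ARMAX system \eqref{eq:genARMAX} with \emph{three} independent polynomial matrices $A(z^{-1})$, $B(z^{-1})$, $C(z^{-1})$, and its conditions are that the triple has no common left factor and that $\rank\begin{bmatrix} A_q & B_r & C_p\end{bmatrix}=m$. Identifiability there means simultaneous uniqueness of the two transfer functions $A^{-1}B$ and $A^{-1}C$ within the degree budget $(q,r,p)$. Your argument never engages with $C$: you work only with the single transfer function $H=A^{-1}B$, assume left coprimeness of the pair $(A,B)$ (which is strictly stronger than the stated hypothesis of no common left factor of the triple), and conclude from $\rank\begin{bmatrix} A_q & B_r\end{bmatrix}=m$ (again strictly stronger than $\rank\begin{bmatrix} A_q & B_r & C_p\end{bmatrix}=m$). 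In the sufficiency direction this matters concretely: under the actual hypotheses the Bezout identity available is $AM_1+BM_2+CM_3=I$, so showing that $\tilde D:=\tilde A A^{-1}$ is polynomial requires also establishing $\tilde C=\tilde D C$ from the second transfer-function identity $\tilde A^{-1}\tilde C= A^{-1}C$, and the final degree argument must yield $D_{n_d}\begin{bmatrix} A_q & B_r & C_p\end{bmatrix}=0$ before the rank condition can be invoked. None of these steps appears in your write-up, so the claimed equivalence is not established.

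What you have in fact written is, nearly verbatim, the paper's proof of Theorem~\ref{thm:identifiability} (the low rank graphical ARMAX model \eqref{eq:ARMAX}, where the noise polynomial coincides with $A$ and the conditions correctly reduce to left coprimeness of $(A,B)$ and $\rank\begin{bmatrix} A_q & B_r\end{bmatrix}=m$): the $(I+Dz^{-1})$ perturbation for necessity, the Bezout identity to show $\tilde D$ is polynomial, and the leading-coefficient cancellation for sufficiency all match that proof. Note also that the paper does not prove Theorem~\ref{thm:chen93} at all --- it is imported from \cite{Chen93}, and Corollary~\ref{coro:chen93} is stated as an easy extension. If your goal is to supply a proof of Theorem~\ref{thm:chen93} itself, the skeleton of your argument is adaptable, but every step must be carried out for the triple $(A,B,C)$ and for both transfer functions simultaneously; as written, the proof establishes a different (and not logically equivalent) statement.
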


Note that, in \cite{Chen93}, $z$ denotes the backward time shift operator, while the symbol $z$ in this paper (including the description of the above theorem) denotes forward time shift operator overall. 

The above identifiability theorem restricts the ARMAX model by a one-step time delay and the monic properties of both $A, C$. It is easy to extend the result to a more general ARMAX model, where only $A$ is restricted to be monic. Hence we give the following corollary. The proof is easy based on Theorem~\ref{thm:chen93} and hence omitted. 

\begin{cor}\label{coro:chen93}
System \eqref{eq:genARMAX}, where $y(t), u(t)$ and $\omega(t)$ are $m$-output, $n$-input and $m$-driven noise, respectively; $A(z^{-1}), B(z^{-1})$ and $C(z^{-1})$ are given by
\begin{subequations}\nonumber
\begin{align}
 & A(z^{-1}) = I +A_1z^{-1} + ... + A_q z^{-q},\quad q\geq 0,\\
 & B(z^{-1}) = B_0 + B_1z^{-1} + ... +  B_r z^{-r},\quad r\geq 0, \\
 & C(z^{-1}) = C_0 +C_1z^{-1} + ... + C_p z^{-p}, \quad p\geq 0,
\end{align}
\end{subequations}
is identifiable if and only if $A(z^{-1}), B(z^{-1})$ and $C(z^{-1})$ have no common left factor and $\rank\begin{bmatrix} A_q & B_r &C_p \end{bmatrix}=m.$
\end{cor}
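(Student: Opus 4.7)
The plan is to mirror the proof of Theorem~\ref{thm:identifiability} almost verbatim, now carrying three polynomial matrices $A, B, C$ rather than two, so the only real change is bookkeeping on constant terms and degrees. The identifiability of \eqref{eq:genARMAX} corresponds to uniqueness of the pair of transfer matrices $(A^{-1}B, A^{-1}C)$ within the class of admissible polynomial triples (with $A$ monic and degrees bounded by $q,r,p$), so I would not try to reduce to Theorem~\ref{thm:chen93} directly (which requires $B$ to have no constant term and $C$ to be monic); instead I would re-run the left-factor argument from scratch.

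For the necessity direction, I would suppose \eqref{eq:genARMAX} is identifiable and argue contrapositively. First, if $A,B,C$ share a common left factor $D(z^{-1})\neq I$, I would write $[A,B,C]=D[\tilde A,\tilde B,\tilde C]$; after absorbing $D_0^{-1}$ into the factors I may assume $D_0=I$, so $\tilde A$ is still monic with $\deg \tilde A\le q$ (strictly less than $q$ if $\deg D\ge 1$), and similarly $\deg \tilde B\le r$, $\deg \tilde C\le p$. Then $\tilde A^{-1}\tilde B\equiv A^{-1}B$ and $\tilde A^{-1}\tilde C\equiv A^{-1}C$, contradicting identifiability. Second, if $\rank[A_q,B_r,C_p]<m$, I pick a nonzero $D$ with $D[A_q,B_r,C_p]=0$ and form $\tilde A=(I+Dz^{-1})A$, $\tilde B=(I+Dz^{-1})B$, $\tilde C=(I+Dz^{-1})C$; the top-degree coefficients vanish, so $\deg \tilde A\le q$, $\deg \tilde B\le r$, $\deg \tilde C\le p$, $\tilde A$ stays monic, and again the two transfer matrices coincide, contradicting identifiability. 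Both reductions are verbatim extensions of the corresponding steps in the proof of Theorem~\ref{thm:identifiability}.

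For sufficiency, I would assume left coprimeness and $\rank[A_q,B_r,C_p]=m$, and suppose $(\tilde A,\tilde B,\tilde C)\neq (A,B,C)$ is another admissible triple producing the same transfer matrices. Setting $\tilde D=A^{-1}\tilde A$, I get $\tilde A=\tilde D A$, $\tilde B=\tilde D B$, $\tilde C=\tilde D C$. Left coprimeness of $(A,B,C)$ gives polynomial $M_1,M_2,M_3$ with $AM_1+BM_2+CM_3=I$, which yields $\tilde D=\tilde A M_1+\tilde B M_2+\tilde C M_3$, hence $\tilde D$ is a polynomial matrix. Monicity of both $A$ and $\tilde A$ forces the constant term $\tilde D_0=I$, so $\tilde D=I+\tilde D_1z^{-1}+\cdots+\tilde D_{n_d}z^{-n_d}$ with $\tilde D_{n_d}\neq 0$ if $n_d\ge 1$. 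Reading off the top-degree coefficients of $\tilde A,\tilde B,\tilde C$ and using $\deg\tilde A\le q,\deg\tilde B\le r,\deg\tilde C\le p$ gives $\tilde D_{n_d}[A_q,B_r,C_p]=0$; the rank hypothesis forces $\tilde D_{n_d}=0$, so $n_d=0$ and $\tilde D=I$, i.e.\ $(\tilde A,\tilde B,\tilde C)=(A,B,C)$.

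The only real obstacle — and the reason the corollary is not a one-line invocation of Theorem~\ref{thm:chen93} — is the bookkeeping around the constant term: here $B_0$ need not vanish and $C_0$ need not equal $I$, so the monicity of $\tilde A$ alone must be leveraged to pin down $\tilde D_0=I$, and one must take some care in the necessity direction to normalize any alleged common left factor so that after absorption the new monic $\tilde A$ truly has degree $\le q$. All remaining steps are clerical, and the argument is short enough that I would present it as a direct adaptation of the proof of Theorem~\ref{thm:identifiability} rather than as a separate development.
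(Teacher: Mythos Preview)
Your argument is correct. The paper, however, does not actually write out a proof of Corollary~\ref{coro:chen93}: it simply states that ``the proof is easy based on Theorem~\ref{thm:chen93} and hence omitted.'' So the intended route in the paper is a reduction to the classical ARMAX identifiability result of Chen (Theorem~\ref{thm:chen93}), whereas you deliberately avoid that reduction and instead transplant the self-contained left-factor/Bezout argument from the proof of Theorem~\ref{thm:identifiability} to the triple $(A,B,C)$.

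Both routes are legitimate. Your approach has the advantage of being fully explicit and of not having to massage the model into the restricted form of Theorem~\ref{thm:chen93} (where $B$ has no constant term and $C$ is monic); the bookkeeping you flag---normalizing a putative common left factor so that $\tilde A$ stays monic, and using monicity of $A,\tilde A$ to force $\tilde D_0=I$ in the sufficiency step---is exactly the extra care needed and you handle it correctly. The paper's (unwritten) reduction would presumably absorb $B_0$ via a one-step shift of the input and $C_0$ via a change of the driving noise, which is short once seen but slightly less transparent. Either way the result is immediate, and your write-up would serve perfectly well as the omitted proof.
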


\begin{lem}[\cite{Magnusbook}]\label{lem:JacobinHessian}
Given a scalar function $f(X)\in \Rbb$ of matrix which is differentiable at point $X \in \Rbb^{m\times n}$. Then,
\begin{itemize}
  \item[(a)] $\df f(X) = \trace ({\bf A} \df X) ~~\Leftrightarrow ~~ \frac{\pdf f}{\pdf X}={\bf A}.$
  \item[(b)] if $f(X)$ is second-order differentiable at point $X$. Suppose $\Bb, \Cb \in \Rbb^{n\times m}$ Then,
    \begin{align}\nonumber
        \df^2 f(X) & = \trace \left(\Vb (\df X) \Ub (\df X)'\right)\\ & ~~\Leftrightarrow ~~  \Hb(f(X))=\frac{1}{2}(\Ub' \otimes \Vb + \Ub \otimes \Vb')
    \end{align}
    or
    \begin{align}\nonumber
    \begin{split}
        \df^2 f(X) & = \trace \left(\Bb (\df X) \Cb \df X\right)\\
        & ~~\Leftrightarrow ~~  \Hb(f(X))=\frac{1}{2}K_{n,m}(\Cb' \otimes \Bb + \Bb' \otimes \Cb).
    \end{split}
    \end{align}
\end{itemize}
\end{lem}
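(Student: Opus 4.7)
The plan is to prove parts (a) and (b) by the standard matrix-calculus technique of expressing the scalar (second) differential as a canonical (bi)linear form in $\vecc(\df X)$, then reading off the gradient and Hessian by uniqueness. The toolkit is already recorded in Section~\ref{sec:notation}: the trace--vec identity $\trace(P'Q)=\vecc(P)'\vecc(Q)$, the Kronecker rule $\vecc(A X B)=(B'\otimes A)\vecc(X)$, and the commutation-matrix identity $\vecc(X')=K_{m,n}\vecc(X)$ for $X\in\Rbb^{m\times n}$, together with orthogonality $K_{m,n}'=K_{n,m}$ and $K_{m,n}K_{n,m}=I_{mn}$.

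For part (a), I expand via the chain rule $\df f=\sum_{i,j}(\pdf f/\pdf X_{ij})\,\df X_{ij}$ and, on the other hand, $\trace(\Ab\,\df X)=\sum_{i,j}\Ab_{ji}\,\df X_{ij}$. Under the paper's convention $[\pdf f/\pdf X]_{ji}:=\pdf f/\pdf X_{ij}$, matching coefficients of the free entries $\df X_{ij}$ in both directions yields $\Ab=\pdf f/\pdf X$, which settles the equivalence.

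For the first identity in (b), I rewrite
\begin{align*}
   \df^2 f=\trace\bigl(\Vb\,\df X\,\Ub\,(\df X)'\bigr)=\trace\bigl((\df X)'\Vb\,\df X\,\Ub\bigr)=\vecc(\df X)'\,(\Ub'\otimes \Vb)\,\vecc(\df X),
\end{align*}
using trace cyclicity, then the trace--vec identity applied with $P=\df X$, $Q=\Vb\,\df X\,\Ub$, followed by the Kronecker rule $\vecc(\Vb\,\df X\,\Ub)=(\Ub'\otimes \Vb)\vecc(\df X)$. Since the Hessian must be symmetric, I symmetrize by averaging with the transposed kernel and use $(\Ub'\otimes \Vb)'=\Ub\otimes \Vb'$ to obtain the stated $\Hb(f)=\tfrac12(\Ub'\otimes \Vb+\Ub\otimes \Vb')$. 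The analogous manipulation on $\trace(\Bb\,\df X\,\Cb\,\df X)$ produces $\vecc((\df X)')'(\Cb'\otimes \Bb)\vecc(\df X)=\vecc(\df X)'\,K_{n,m}(\Cb'\otimes \Bb)\,\vecc(\df X)$, to which I again apply symmetrization.

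The main obstacle is the last reduction in the second case of (b): symmetrization initially yields $\tfrac12\bigl[K_{n,m}(\Cb'\otimes \Bb)+(\Cb\otimes \Bb')K_{m,n}\bigr]$, which is not manifestly the claimed $\tfrac12 K_{n,m}(\Cb'\otimes \Bb+\Bb'\otimes \Cb)$. Bridging the two requires the intertwining identity $(\Cb\otimes \Bb')K_{m,n}=K_{n,m}(\Bb'\otimes \Cb)$. I would establish it by applying both sides to an arbitrary $\vecc(Y)$ with $Y\in\Rbb^{m\times n}$: the left side becomes $(\Cb\otimes \Bb')\vecc(Y')=\vecc(\Bb'\,Y'\,\Cb')$ via the Kronecker rule, and the right side becomes $K_{n,m}\vecc(\Cb\,Y\,\Bb)=\vecc((\Cb\,Y\,\Bb)')=\vecc(\Bb'\,Y'\,\Cb')$ via the commutation identity; since $\vecc$ is injective, the matrices agree. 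Substituting this equality into the symmetrized expression yields the stated Hessian and completes the proof.
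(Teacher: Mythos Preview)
Your argument is correct and is precisely the standard vec--Kronecker derivation used in \cite{Magnusbook}. Note, however, that the paper does not prove Lemma~\ref{lem:JacobinHessian} at all: it is stated in Appendix~\ref{apdx:thms} purely as a citation of known results, so there is no ``paper's own proof'' to compare against. Your write-up therefore supplies a self-contained justification where the paper relies on the reference, and the manipulations you give---the trace--vec identity, the Kronecker rule for $\vecc(AXB)$, symmetrization of the quadratic kernel, and the intertwining identity $(\Cb\otimes\Bb')K_{m,n}=K_{n,m}(\Bb'\otimes\Cb)$---are exactly the ones Magnus and Neudecker employ.
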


\begin{prop}[Constraints w.r.t. $\vecc(\Theta)$]\label{prop:JH_cons}
The constraint~\eqref{eq:Acons} can be written as
\begin{subequations}\label{eq:consA2}
\begin{align}
\begin{split}
C_{A_{kj}}\vecc(\Theta)=0,  ~~&\text{for~} j=1,\cdots, m,\\
    &~~\text{for~}k=1,\cdots, q.
\end{split}
\end{align}
where
\begin{align}\label{eq:CA}
\begin{split}
C_{A_{kj}}= \Big[
           0 & _{(m-1)\times [(k-1)m^2+(j-1)m]}, ~~ I_{(-j)}' , \\
          & 0_{(m-1)\times [(q-k+1)m^2-jm]} ,~~ 0_{(m-1)\times mlr} \Big],
\end{split}
\end{align}
\begin{align}
    I_{(-j)}= \begin{bmatrix}
                \iota_1 & \cdots & \iota_{j-1} & \iota_{j+1} & \cdots & \iota_m
              \end{bmatrix},
\end{align}
$\iota_k$ denotes an $m$-dimensional column vector with the $k$-th entry equal to $1$ and other entries $0$ for $k=1, \cdots, j-1, j+1, \cdots, m$.
\end{subequations}

The constraint~\eqref{eq:Bcons} can be written as
\begin{subequations}\label{eq:consB2}
\begin{align}
\begin{split}
    c_{B_{jkh}}\vecc(\Theta)=0, ~~&\text{for~} j=0,1,\cdots, r, \\
                & ~~\text{for~} (k,h)\notin E^{H(z)},
\end{split}
\end{align}
where 
\begin{align}\label{eq:cB}
    c_{B_{jkh}} = \iota_{qm^2+jlm+(h-1)m+k,~qm^2+rlm}',
\end{align}
$\iota_{qm^2+jlm+(h-1)m+k, qm^2+rlm}$ denotes $(qm^2+rlm)$-dimensional vector with the $(qm^2+jlm+(h-1)m+k)$-th entry equal to 1 and others 0. 
\end{subequations}

Hence the linear constraint matrix $C$ in ARMAX low rank graphical estimation problem w.r.t. $\vecc(\Theta)$ can be written as a block column vector matrix composed of matrices in \eqref{eq:CA} and \eqref{eq:cB}, 
satisfying  $C\vecc(\Theta)=0$.

\end{prop}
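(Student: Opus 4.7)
The proof will be a direct bookkeeping verification: the claim is not a computation over the likelihood, but merely that the scalar constraints on the entries of $\Theta=[\Acal,\Bcal]$ imposed by \eqref{eq:Acons} and \eqref{eq:Bcons} are each equivalent to a linear equation in the coordinates of $\vecc(\Theta)$, and that the rows displayed in \eqref{eq:CA}--\eqref{eq:cB} pick out exactly those coordinates. The plan has three parts: locate each constrained scalar entry inside $\vecc(\Theta)$, write the corresponding single constraint as a row vector applied to $\vecc(\Theta)$, and then stack the rows.

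First I would fix the column-stacking convention. Because $\Theta=[A_{1},\dots,A_{q},B_{0},\dots,B_{r}]$ and $\vecc$ stacks columns, the scalar entry $[A_{k}]_{ij}$ sits at position $(k-1)m^{2}+(j-1)m+i$ of $\vecc(\Theta)$, and $[B_{j}]_{kh}$ sits at position $qm^{2}+jlm+(h-1)m+k$. These index formulas follow immediately from the definition of $\vecc$ together with the ordering of the blocks inside $\Theta$, and they are the only facts used in the rest of the argument.

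For the diagonality constraint \eqref{eq:Acons}, $A_{k}\in\Db_{m}$ if and only if, for each column index $j\in\{1,\dots,m\}$, the $m-1$ off-diagonal entries $[A_{k}]_{ij}$ with $i\neq j$ vanish. The matrix $I_{(-j)}'$ is precisely the selector that, applied to the $j$-th column of $A_{k}$, returns those $m-1$ entries. Embedding $I_{(-j)}'$ into a wider row at the correct offset then amounts to: inserting $(k-1)m^{2}+(j-1)m$ leading zeros to skip past $A_{1},\dots,A_{k-1}$ and the first $j-1$ columns of $A_{k}$; placing $I_{(-j)}'$ to read the next block of length $m$; and appending zeros of width $(q-k+1)m^{2}-jm$ and $mlr$ to cover the rest of $\vecc(\Acal)$ and all of $\vecc(\Bcal)$. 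This is exactly the block structure in \eqref{eq:CA}, so $C_{A_{kj}}\vecc(\Theta)=0$ is equivalent to the off-diagonal entries of the $j$-th column of $A_{k}$ vanishing. Ranging over $j$ and $k$ yields \eqref{eq:consA2}. For the sparsity constraint \eqref{eq:Bcons}, each scalar equation $[B_{j}]_{kh}=0$ is a single-entry condition, so the corresponding row is the standard basis row at the index $qm^{2}+jlm+(h-1)m+k$ computed above; this is the definition of $c_{B_{jkh}}$ in \eqref{eq:cB}. Stacking all $C_{A_{kj}}$ and $c_{B_{jkh}}$ vertically produces the constraint matrix $C$ with $C\vecc(\Theta)=0$ equivalent to the conjunction of \eqref{eq:Acons} and \eqref{eq:Bcons}.

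The only obstacle here is index arithmetic: the main thing I would verify carefully is the offset of $qm^{2}$ separating the $\vecc(\Acal)$ and $\vecc(\Bcal)$ portions, and the inner offset $(j-1)m$ locating the correct column within an $A_{k}$ block, since an off-by-one at either place would misalign the selectors. Once those index identities are checked, no further analytic work is needed; the proposition follows by straightforward rewriting.
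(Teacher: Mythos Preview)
Your approach is correct and is exactly the natural one: locate each constrained scalar entry of $\Theta$ inside $\vecc(\Theta)$ via the column-stacking convention, then read off the selector rows. The paper in fact states this proposition in the appendix without proof, so there is no alternative argument to compare against; your bookkeeping verification is precisely what the authors implicitly rely on.

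One remark your careful index check should surface: the trailing zero block in \eqref{eq:CA} is written with width $mlr$, and the ambient dimension in \eqref{eq:cB} is written as $qm^{2}+rlm$, but since $\Bcal=[B_{0},B_{1},\dots,B_{r}]$ contains $r+1$ blocks of size $m\times l$, the correct widths are $ml(r+1)$ and $qm^{2}+(r+1)lm$ respectively. This is a typo in the paper's statement rather than a flaw in your argument; your offset computation $qm^{2}+jlm+(h-1)m+k$ for $[B_{j}]_{kh}$ is correct as stated.
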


\bibliography{reference}

\end{document}